
\documentclass{article}

\usepackage{microtype}
\usepackage{graphicx}
\usepackage[dvipsnames]{xcolor}
\usepackage{subcaption}
\usepackage{booktabs} 
\usepackage{array}
\usepackage{hyperref}
\usepackage{multirow}



\usepackage[preprint]{icml2026}


\usepackage{amsmath}
\usepackage{amssymb}
\usepackage{mathtools}
\usepackage{amsthm}
\usepackage{thmtools,thm-restate}
\usepackage{enumitem}

\usepackage{pifont}
\newcommand{\xmark}{\ding{55}} 

\usepackage[capitalize,noabbrev]{cleveref}

\theoremstyle{plain}
\newtheorem{theorem}{Theorem}[section]

\theoremstyle{definition}
\newtheorem{definition}[theorem]{Definition}

\theoremstyle{remark}
\newtheorem{remark}[theorem]{Remark}

\usepackage[textsize=tiny]{todonotes}



\newcommand{\synthetic}{\textsc{Synthetic}}
\newcommand{\synthetictwo}{\textsc{Synthetic2}}
\newcommand{\syntheticthree}{\textsc{Synthetic3}}
\newcommand{\syntheticfour}{\textsc{Synthetic4}}
\newcommand{\airiot}{\textsc{Air}}
\newcommand{\electric}{\textsc{Electric}}
\newcommand{\tadpole}{\textsc{Tadpole}}
\newcommand{\inductive}{\textsc{inductive}}
\newcommand{\rebuttal}[1]{\textcolor{black}{#1}}

\newcommand{\cora}{\textsc{Cora}}
\newcommand{\citeseer}{\textsc{Citeseer}}
\newcommand{\pubmed}{\textsc{Pubmed}}
\newcommand{\amazoncomputers}{\textsc{AmazonComputers}}
\newcommand{\amazonphoto}{\textsc{AmazonPhoto}}

\newcommand{\gnnmask}{\texttt{GNNmim}}
\newcommand{\fp}{\texttt{FP}}
\newcommand{\pcfi}{\texttt{PCFI}}
\newcommand{\goodie}{\texttt{GOODIE}}
\newcommand{\gcnmf}{\texttt{GCNmf}}
\newcommand{\gspn}{\texttt{GSPN}}
\newcommand{\fairac}{\texttt{FairAC}}
\newcommand{\gnnmi}{\texttt{GNNmi}}
\newcommand{\gnnzero}{\texttt{GNNzero}}
\newcommand{\gnnmedian}{\texttt{GNNmedian}}

\newcommand{\meanstd}[2]{#1 {\scriptsize ($\pm$ #2)}}

\definecolor{firstblue}{RGB}{0, 76, 153}   
\definecolor{secondgreen}{RGB}{238, 177, 86} 
\definecolor{thirdorange}{RGB}{219, 79, 64} 

\newcommand{\Xobs}{\mathbf{X}^{\emph{obs}}}
\newcommand{\Xmiss}{\mathbf{X}^{\emph{miss}}}
\newcommand{\Ptgl}{P_{\boldsymbol{\theta},\boldsymbol{\gamma},\boldsymbol{\lambda}}}
\newcommand{\Pgl}{P_{\boldsymbol{\gamma},\boldsymbol{\lambda}}}
\newcommand{\Pt}{P_{\boldsymbol{\theta}}}
\newcommand{\Pg}{P_{\boldsymbol{\gamma}}}
\newcommand{\Pl}{P_{\boldsymbol{\lambda}}}
\newcommand{\Pgplus}{P_{\boldsymbol{\gamma}^+}}

\definecolor{manfredcol}{RGB}{0, 76, 200}

\newcommand{\ff}[1]{\textcolor{black}{#1}}

\icmltitlerunning{Rethinking GNNs and Missing Features: Challenges, Evaluation and a Robust Solution}

\begin{document}

\twocolumn[
  \icmltitle{Rethinking GNNs and Missing Features: Challenges, Evaluation and a Robust Solution}



  \icmlsetsymbol{equal}{*}

  \begin{icmlauthorlist}
    \icmlauthor{Francesco Ferrini}{yyy}
    \icmlauthor{Veronica Lachi}{comp}
    \icmlauthor{Antonio Longa}{comp}
    \icmlauthor{Bruno Lepri}{fbk}
    \icmlauthor{Matono Akiyoshi}{jap}
    \icmlauthor{Andrea Passerini}{yyy}
    \icmlauthor{Xin Liu}{jap}
    \icmlauthor{Manfred Jaeger}{aal}
    
  \end{icmlauthorlist}

  \icmlaffiliation{yyy}{University of Trento, Trento, Italy}
  \icmlaffiliation{comp}{UiT, The Arctic University of Norway, Tromsø, Norway}
  \icmlaffiliation{fbk}{Fondazione Bruno Kessler, Trento, Italy}
  \icmlaffiliation{aal}{Aalborg University, Aalborg, Denmark}
  \icmlaffiliation{jap}{AIST Tokyo, Tokyo, Japan}

  \icmlcorrespondingauthor{Francesco Ferrini}{francesco.ferrini@unitn.it}
  \icmlcorrespondingauthor{Veronica Lachi}{veronica.lachi@uit.no}

  \icmlkeywords{Machine Learning, ICML}

  \vskip 0.3in
]



\printAffiliationsAndNotice{}  

\begin{abstract}
Handling missing node features is a key challenge for deploying Graph Neural Networks (GNNs) in real-world domains such as healthcare and sensor networks. Existing studies mostly address relatively benign scenarios, namely benchmark datasets with (a) high-dimensional but sparse node features and (b) incomplete data generated under \emph{Missing Completely At Random (MCAR)} mechanisms. For (a), we theoretically prove that high sparsity substantially limits the information loss caused by missingness, making all models appear robust and preventing a meaningful comparison of their performance. To overcome this limitation, we introduce one synthetic and three real-world datasets with dense, semantically meaningful features. For (b), we move beyond MCAR and design evaluation protocols with more realistic missingness mechanisms. Moreover, we provide a theoretical background to state explicit assumptions on the missingness process and analyze their implications for different methods. \ff{Building on this analysis, 
we show that a simple baseline adapted to the graph domain is competitive with respect to specialized architectures across diverse datasets and missingness regimes.\footnote{Code available at \url{https://github.com/francescoferrini/gnnmim}.}}
\end{abstract}

\section{Introduction}

Learning with missing features is a pervasive and often unavoidable challenge in many real-world machine learning applications, such as healthcare~\citep{braem2024missing, mirkes2016handling}, IoT sensor networks~\citep{faizin2019review, okafor2021missing, agbo2022missing}, and recommender systems~\citep{marlin2009collaborative,he2017neural,marlin2011recommender}. This issue naturally extends to Graph Neural Networks (GNNs), which are increasingly applied in domains where missing features are common. In this work, we focus specifically on the problem of \emph{missing node feature data}, a setting that has received growing attention in the GNN literature \citep{um2023confidencebased,yun2024oldie,rossi2022unreasonable,guo2023fair,taguchi2021graph,errica2024tractable,um2025propagate}

A wide range of methods have been proposed, from simple mean imputation~\citep{you2020handling} to architectures that jointly impute and predict during training~\citep{guo2023fair}. These approaches are typically evaluated by synthetically removing features from widely used node classification benchmarks such as \textsc{Cora}, \textsc{Citeseer}, and \textsc{Pubmed} \citep{DBLP:journals/corr/YangCS16}. However, despite the growing number of models, little attention has been paid to the validity of these evaluation protocols. We argue that two critical issues remained largely unaddressed: (i) the datasets used for evaluation, and  
(ii) the missingness mechanisms applied to generate incomplete features.  

Regarding (i), existing evaluations rely on datasets with \emph{extremely sparse} node features, typically bag-of-words representations where the vast majority of entries are zero. This raises a crucial question: \textit{can robustness to missing features be meaningfully assessed when most features are already absent?} Our theoretical analysis shows that in highly sparse settings, the mutual information between features and labels is barely affected by additional missingness, except at extremely high missing rates. Empirically, we find that all the existing GNN-based methods maintain high performance across a wide range of missingness levels on these benchmarks, with performance degrading only when more than 90\% of entries are removed. These results cast serious doubt on the ability of current benchmarks to meaningfully assess the robustness of the models.

To move beyond this limitation, we identify a set of datasets, one synthetic and three real-world, with dense, raw features that are naturally low-dimensional and semantically meaningful (e.g., physical measurements). These datasets offer a more realistic setting for studying GNNs under feature missingness. This focus on dataset quality aligns with recent calls for more careful benchmark design in graph machine learning~\citep{bechler2025position,coupette2025no}.

Regarding (ii), the design of the missingness mechanisms used during evaluation is overly simplistic. Most prior works consider only \emph{Missing Completely At Random (MCAR)} mechanisms~\citep{rubin1976inference,little2019statistical}, where feature deletion is independent of the data. In practice, however, missingness is often related to the feature values or prediction target \citep{carreras2021missing,hazewinkel2022sensitivity,kopra2015correcting}. For example, a patient might be less likely to report their weight if it is above a certain threshold. This corresponds to a Missing Not At Random (MNAR) mechanism~\citep{rubin1976inference}, in which the probability of missingness depends on the unobserved feature value itself. A further limitation of existing evaluation protocols is the implicit assumption that the missingness mechanism remains identical across training and test data. In practice, however, this is often not the case: for example, training data may be historical and collected with obsolete sensors prone to failures, while test data come from newer sensors with little or no missingness. To overcome this limitation of the current evaluation procedure, we design more realistic evaluation protocols. These include new, more representative instances of MCAR and MNAR mechanisms, as well as train–test distribution shifts. Such conditions more accurately capture real-world deployment challenges, where both the causes and the distributions of missing data may vary across stages.

\ff{Finally, we study a simple MIM augmentation~\citep{van2023missing} for GNNs: we concatenate the node features with their binary missingness mask and feed the resulting representation to an otherwise standard GNN. This yields a lightweight baseline that requires no learned imputation and does not rely on MAR assumptions, making it suitable for MNAR cases. We also evaluate it on RelBench~\cite{robinson2024relbench}, a suite of large graph benchmarks constructed from relational-database with naturally occurring missing values and with unknown underlying missing mechanism; the MIM augmentation is competitive in this realistic setting.}


\paragraph{Contributions.}
To summarize, our main contributions are:

\begin{enumerate}[leftmargin=*, itemsep=0pt, topsep=0pt]
    \item We provide a theoretical analysis showing that the impact of missing features depends strongly on feature sparsity, and derive an information-theoretic bound on the resulting loss. 
    \item We introduce one synthetic and three real-world datasets with dense, informative features, and show experimentally that models appearing robust on sparse benchmarks fail on these datasets.
    \item We propose realistic evaluation protocols, including new, more representative instances of MCAR and MNAR mechanisms and train–test distribution shifts, and demonstrate that existing methods are not robust to all the possible settings.
    \item \ff{We show that adding a simple MIM component to existing GNN architectures is competitive with respect to existing approaches across datasets, missingness types, distribution shifts, and benchmarks with naturally occurring missingness.}
\end{enumerate}

\ff{The core aim of this paper is to enable more reliable evaluation of GNNs with missing node features. We show that apparent robustness is often driven by evaluation artifacts, namely sparse features and overly benign missingness mechanisms. By combining dense, semantically meaningful datasets, realistic missingness protocols, and a clear theoretical framing, we establish a foundation that enables more meaningful and reliable research directions. Within this improved evaluation setup, adding a simple MIM component to GNNs yields a lightweight baseline that avoids MAR assumptions, making it ideal under MNAR mechanisms and robust for real-world datasets where the missingness process is unknown.}


\section{Learning from Incomplete Graph Data}
\label{sec:preliminaries}

We consider an attributed graph  $G = (V, E, \mathbf{X}, \mathbf{Y})$, where $V=\{1,\dots,n\}$ is the set of nodes, $E \subseteq V \times V$ is the set of edges represented by the adjacency matrix $\mathbf{A} \in \{0,1\}^{n \times n}$, $\mathbf{X}\in \mathbb{R}^{n\times d}$ is the node feature matrix with entry $X_{ij}$ denoting feature $j$ of node $i$, and $\mathbf{Y} \in \mathcal{Y}^n$ is the vector of node labels.

When data is incomplete, some entries of $\mathbf{X}$ are unobserved. Let  $\mathbf{M}\in\{0,1\}^{n\times d}$  be the missingness indicator matrix that has $M_{ij}=1$ if $x_{ij}$ is missing and $0$ otherwise. In our setting, the missingness indicator matrix $\mathbf{M}$ is directly and deterministically constructed from the observed dataset. Missing values are explicitly marked in the raw data, so the mask $\mathbf{M}$ is uniquely defined and contains no uncertainty. Let $\Xobs$ be the elements of $\mathbf{X}$ for which $M_{ij}=0$, and $\Xmiss$ the elements for which  $M_{ij}=1$. The observed data from which we learn then can be written as $\Xobs,  \mathbf{Y},  \mathbf{M}$. We note that we here make the assumption that $\mathbf{Y}$ is fully observed in the (training) data, and that there is no uncertainty about the graph structure $E$. The distribution of the data then can be parameterized as
\begin{equation}
  \label{eq:datadis}
  \Ptgl(\Xobs,  \mathbf{Y},  \mathbf{M}) = \int_{\Xmiss}\Pt(\mathbf{X})\Pg(\mathbf{Y}|\mathbf{X})\Pl(\mathbf{M}|\mathbf{X},  \mathbf{Y}),
\end{equation}
where $\mathbf{X}=\Xobs\cup\Xmiss$, $\Pt$ is the node feature distribution, $\Pg$ is the conditional label distribution, and
$\Pl$ represents the \emph{missingness mechanism}. Though not explicitly reflected in the notation, all these distributions will usually depend on the underlying graph structure, which will typically induce dependencies among the rows of $\mathbf{X}$, and among the elements of $\mathbf{Y}$.  

A GNN for node classification with complete feature data is a model $\Pg(\mathbf{Y}|\mathbf{X})$ with $\boldsymbol{\gamma}$ the weights of the GNN. For classification with incomplete data we need to learn the conditional model

\begin{multline}
\label{eq:condmodel}
\Ptgl(\mathbf{Y} \mid \Xobs, \mathbf{M})
= \int_{\Xmiss}
\Ptgl(\boldsymbol{Y} \mid \boldsymbol{X}, \boldsymbol{M}) \\
\Ptgl(\Xmiss \mid \Xobs, \boldsymbol{M}).
\end{multline}
 The classical \emph{missing (completely) at random (M(C)AR)} assumptions \citep{rubin1976inference} simplify this problem. The original M(C)AR assumptions have been formulated in the context of estimating the parameter of a generative distribution. It has been observed that more specialized variations of the original definitions can be more pertinent in the context of classification \citep{ding2010investigation,ghorbani2018embedding}. 
In the following  we give  formulations of M(C)AR for classification that provide the foundations for our theoretical analysis.

 \begin{definition}
   The joint distribution $\Ptgl$ is \emph{feature}-MAR, if
   \begin{equation}
     \label{eq:fmar}
     \Pgl( \mathbf{M} | \Xmiss,\Xobs)=\Ptgl( \mathbf{M} | \Xobs).
   \end{equation}
   It is \emph{label}-MAR if
   \begin{equation}
     \label{eq:lmar}
      \Pl( \mathbf{M} | \boldsymbol{X},\boldsymbol{Y}) =  \Pgl( \mathbf{M} | \boldsymbol{X}).
    \end{equation}
    The distribution is MCAR, if
    \begin{equation}
      \label{eq:mcar}
       \Pl( \mathbf{M} | \boldsymbol{X},\boldsymbol{Y}) =  \Ptgl( \mathbf{M} ).
    \end{equation}
 \end{definition}

 In (\ref{eq:fmar})-(\ref{eq:mcar}) all probability functions are indexed with the parameters they actually depend on. Note, for example,
 that the conditional of $\mathbf{M}$ given $\mathbf{X}$ requires marginalization over $\mathbf{Y}$, and thereby also
 depends on the parameter $\boldsymbol{\gamma}$. MCAR implies both feature- and label-MAR.

 The simplest realization of an MCAR mechanism is \emph{uniform missingness (U-MCAR)} in which entries of $\mathbf{X}$ are independently missing with a fixed missingness probability $\mu$. This can be generalized by defining a missingness probability matrix $\boldsymbol{\mu}\in[0,1]^{n\times d}$ specifying potentially different missingness probabilities for different entries of $\mathbf{X}$.  
 
 
 MAR assumptions allow us to eliminate the missingness model $\Pl$ from (\ref{eq:condmodel}). The following proposition states this classical \emph{ignorability} result in a version most suitable in our context.
 
 \begin{restatable}{theorem}{ignorable}
   \label{prop:ignorability}
   If  $\Ptgl$ is feature-MAR and label-MAR, then  (\ref{eq:condmodel}) simplifies to
   \begin{equation}
     \label{eq:condmodelmar}
     \int_{\Xmiss} \Pg(\boldsymbol{Y}|\boldsymbol{X})\Pt(\Xmiss|\Xobs).
   \end{equation}
 \end{restatable}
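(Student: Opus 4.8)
The plan is to peel apart the integrand of (\ref{eq:condmodel}) factor by factor. Its two factors are the label model $\Ptgl(\boldsymbol{Y}\mid\boldsymbol{X},\boldsymbol{M})$ and the imputation model $\Ptgl(\Xmiss\mid\Xobs,\boldsymbol{M})$, and proving (\ref{eq:condmodelmar}) amounts to showing the first equals $\Pg(\boldsymbol{Y}\mid\boldsymbol{X})$ and the second equals $\Pt(\Xmiss\mid\Xobs)$. Each reduction is the symmetric rephrasing of a conditional-independence statement: label-MAR says $\boldsymbol{M}$ is conditionally independent of $\boldsymbol{Y}$ given $\boldsymbol{X}$, hence $\boldsymbol{Y}$ is conditionally independent of $\boldsymbol{M}$ given $\boldsymbol{X}$; feature-MAR says $\boldsymbol{M}$ is conditionally independent of $\Xmiss$ given $\Xobs$, hence $\Xmiss$ is conditionally independent of $\boldsymbol{M}$ given $\Xobs$. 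Each such symmetry is a one-line Bayes computation from the factorization (\ref{eq:datadis}), and notably the first factor needs only label-MAR while the second needs only feature-MAR, so the two hypotheses of the theorem are used exactly once each.

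Concretely, fixing a missingness pattern $\boldsymbol{M}=\boldsymbol{m}$ (which determines which coordinates of $\mathbf{X}$ are the observed ones), Bayes' rule together with (\ref{eq:datadis}) gives
\begin{equation*}
  \Ptgl(\boldsymbol{Y}\mid\boldsymbol{X},\boldsymbol{M})
  = \frac{\Pl(\boldsymbol{M}\mid\boldsymbol{X},\boldsymbol{Y})\,\Pg(\boldsymbol{Y}\mid\boldsymbol{X})}{\int \Pl(\boldsymbol{M}\mid\boldsymbol{X},\boldsymbol{Y}')\,\Pg(\boldsymbol{Y}'\mid\boldsymbol{X})\,d\boldsymbol{Y}'}.
\end{equation*}
Label-MAR (\ref{eq:lmar}) replaces $\Pl(\boldsymbol{M}\mid\boldsymbol{X},\boldsymbol{Y})$ by $\Pgl(\boldsymbol{M}\mid\boldsymbol{X})$, which is free of $\boldsymbol{Y}$, so it factors out of the denominator integral and cancels the numerator, leaving $\Ptgl(\boldsymbol{Y}\mid\boldsymbol{X},\boldsymbol{M})=\Pg(\boldsymbol{Y}\mid\boldsymbol{X})$. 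For the other factor, marginalizing $\boldsymbol{Y}$ in (\ref{eq:datadis}) writes the joint of $(\Xobs,\Xmiss,\boldsymbol{M})$ as $\Pt(\boldsymbol{X})\,\Pgl(\boldsymbol{M}\mid\boldsymbol{X})$; feature-MAR (\ref{eq:fmar}) turns $\Pgl(\boldsymbol{M}\mid\boldsymbol{X})$ into $\Ptgl(\boldsymbol{M}\mid\Xobs)$, a factor constant in $\Xmiss$, which therefore cancels upon normalizing over $\Xmiss$, leaving $\Ptgl(\Xmiss\mid\Xobs,\boldsymbol{M})=\Pt(\Xmiss\mid\Xobs)$. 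Substituting both simplified factors back into (\ref{eq:condmodel}) yields (\ref{eq:condmodelmar}).

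I expect the substantive content to be exactly these two cancellations; the remaining work is bookkeeping, and that is where the only real care lies. The split $\mathbf{X}=\Xobs\cup\Xmiss$ is itself a deterministic function of $\boldsymbol{M}$, so the argument must be carried out pattern-wise in $\boldsymbol{M}=\boldsymbol{m}$, and all conditionals should be read as densities or regular conditional distributions — consistent with the integrals already in (\ref{eq:datadis}) — so that the Bayes manipulations and the cancellation of normalizing integrals are legitimate (in particular one wants the relevant conditioning events to have positive density). One must also track which parameters each conditional actually depends on: as the remark following the definition points out, $\Pgl(\boldsymbol{M}\mid\boldsymbol{X})$ acquires a dependence on $\boldsymbol{\gamma}$ through the marginalization over $\boldsymbol{Y}$, and $\Ptgl(\boldsymbol{M}\mid\Xobs)$ additionally acquires $\boldsymbol{\theta}$ through the marginalization over $\Xmiss$, so one should check that the quantities being cancelled really coincide. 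No inequality or quantitative estimate appears, so I do not anticipate any obstacle beyond this notational discipline.
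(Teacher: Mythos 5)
Your proposal is correct and follows essentially the same route as the paper's proof: rewrite each of the two factors in (\ref{eq:condmodel}) by Bayes' rule and cancel the missingness term using label-MAR for $\Ptgl(\boldsymbol{Y}\mid\boldsymbol{X},\boldsymbol{M})$ and feature-MAR for $\Ptgl(\Xmiss\mid\Xobs,\boldsymbol{M})$. Your version merely spells out the normalizing integrals and the pattern-wise/parameter bookkeeping that the paper leaves implicit.
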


 \paragraph{Intuition.}
 Under feature-MAR and label-MAR, the missingness pattern carries no predictive information. The learning problem reduces to the usual classification task with imputed features, meaning that methods explicitly modeling the missingness mask do not gain theoretical advantage in this regime.
 
  The proof is straightforward by rewriting the two factors on the right of  (\ref{eq:condmodel}) using Bayes's rule, and plugging in
 (\ref{eq:fmar}) and  (\ref{eq:lmar}). 
 Formulation (\ref{eq:condmodelmar}) still poses two major challenges: it requires a feature distribution model $\Pt$ when in reality we only are interested
 in the conditional model $\Pg$, and the integration over $\Xmiss$ is usually intractable~\citep{ipsen2022deal}. The simplest approach to address these problems is to approximate the integral (\ref{eq:condmodelmar}) by evaluating $\Pg(\boldsymbol{Y}|\boldsymbol{X})$ at a single imputed value $\mathbf{X}=\emph{impute}(\Xmiss)$~\citep{rubin1988overview}. This does not require an explicit model for $\Pt$, but relies on the implicit assumption that the imputed value $\emph{impute}(\Xmiss)$ has high probability under $\Pt$. A simple example is \emph{mean-imputation}, in which missing values of a given feature are filled with the mean of that feature; we will refer to this approach combined with a standard GNN as \gnnmi\ \citep{you2020handling}.
In addition, we also consider \emph{zero-imputation}, where missing entries are replaced with zeros (\gnnzero), and \emph{median-imputation}, where they are filled with the feature median (\gnnmedian).
 Similarly, \pcfi\ \citep{um2023confidencebased} does not require an explicit model for $\Pt$; it introduces a confidence-guided imputation scheme where pseudo-confidence is derived from the shortest-path distance to observed features, and combines channel-wise diffusion with inter-channel propagation to recover a single estimate of $\mathbf{X}$. \goodie\ \citep{yun2024oldie} approximates the integral in (\ref{eq:condmodelmar}) using a combination of label propagation, while \fp\ \citep{rossi2022unreasonable} propagates features by minimizing a Dirichlet energy function, whereas \fairac\ \citep{guo2023fair} does so by aggregating, via an attention mechanism, the representations from neighbors of nodes with missing features.

Other methods explicitly model $\Pt$. The \gcnmf\ approach of \citet{taguchi2021graph} introduces a model of $\Pt$ in the form of a mixture of Gaussians, and approximates (\ref{eq:condmodelmar}) by $\Pg(\boldsymbol{Y}, |, \mathbb{E}_\theta[\mathbf{L}1 \mid \Xobs])$, where $\mathbb{E}_\theta[\mathbf{L}_1 \mid \Xobs]$ is the expected activation at the first layer of the GNN defining $\Pg$. Finally, \gspn~\citep{errica2024tractable} explicitly models $\Pt$ with graph-induced sum–product networks, so missing features are handled by exact marginalization.

 An alternative to all these approaches that work entirely with models $\Pt,\Pg$ for the (complete) data distribution is to include
 the missingness mechanism explicitly in a  model $   \Pgplus(\boldsymbol{Y} | \Xobs,\boldsymbol{M})$, that directly captures the left side of (\ref{eq:condmodel}). We here
 write $\boldsymbol{\gamma}^+$ for the parameters of the model to emphasize that it can be structurally similar to a
 model $\Pg(\boldsymbol{Y} | \boldsymbol{X})$, but different in that it has the missingness matrix $\boldsymbol{M}$ as an explicit extra input.
 This modeling strategy, often referred to as the Missing Indicator Method (MIM), has been studied in the context of supervised learning with missing features~\citep{van2023missing}, \ff{and applied in other domains such as multivariate time series imputation~\citep{cao2018brits,cini2022filling}, where however the focus is on reconstructing missing values under stationary MAR-style mechanisms. To the best of our knowledge, MIM has not been explored in the context of graph machine learning for node level tasks under arbitrary missingness mechanisms, including MNAR and train--test distribution shifts.} In this work, we propose a GNN-based instantiation of the MIM framework, which we call \gnnmask. In \gnnmask, we implement $\Pgplus$ as a GNN, we construct the matrix $\emph{zero-fill}(\Xobs)$ in which missing values are filled in by zeros, and use the concatenation $\emph{zero-fill}(\Xobs)_{i,:} || \boldsymbol{M}_{i,:}$ as the feature vector for node $i$ in an otherwise standard GNN architecture\footnote{We deliberately here say ``zero-filling'' rather than ``zero-imputation''. The latter would imply that we view the zeros as somehow reasonable stand-ins for the true unobserved values. We view the zeros as arbitrary placeholders. Ideally, the trained model will learn to ignore these values when the corresponding missingness indicator is 1.}. \gnnmask\ does not rely on any MAR assumptions, and thereby can be expected to perform more robustly than other approaches under different missingness mechanisms. As our experiments in Section~\ref{sec:experiments} show, this simple yet principled strategy yields robust performance across a wide variety of missingness scenarios. In Appendix~\ref{app:mimon} we provide additional analyses where the missing-feature mask is applied not only to zero imputation but also to the existing models presented in this section, \ff{and in Appendix~\ref{app:additional-baselines} we further compare \texttt{GNNmim} against classical iterative imputation \texttt{MICE}~\citep{van2011mice} and non-graph baselines, namely \texttt{MLP+MIM} and \texttt{XGBoost}~\citep{chen2016xgboost}.}

\section{Are we evaluating GNNs for missing features on the right data?}
\label{sec:are_we_evaluating}

A rigorous evaluation of GNNs under feature missingness requires not only well-designed models, but also datasets that are suitable for the problem at hand. Recent work in the graph learning community has emphasized the importance of dataset suitability in benchmarking~\citep{bechler2025position,coupette2025no}. 
In the context of learning with missing node features, dataset suitability is even more critical. Models designed to handle missingness should be tested on datasets where the presence of missing features meaningfully affects model performance and where reasoning under missingness is necessary and non-trivial.

The current standard practice in the literature is to evaluate state-of-the-art 
methods on a set of widely-used benchmarks for node-level tasks, namely, 
\cora, \citeseer, \pubmed, \amazoncomputers, and \amazonphoto. 
In these datasets, node features are constructed as follows: 
\cora, \citeseer\ and \pubmed\ use binary bag-of-words features, while 
\amazoncomputers\ and \amazonphoto\ use TF-IDF vectors~\citep{aizawa2003information}. 
These feature matrices are typically very sparse, which we quantify using 
the notion of \emph{feature sparsity}, formally defined as below:

\begin{definition}[Feature Sparsity]
\label{def:sparsity}
Given a node feature matrix $\mathbf{X} \in \mathbb{R}^{n \times d}$, 
the \emph{feature sparsity} is defined as the proportion of zero entries: 
$s(\mathbf{X}) = \frac{1}{n d} \sum_{i=1}^{n} \sum_{j=1}^{d} \mathbf{1}[X_{ij} = 0],$ 
where $\mathbf{1}[\cdot]$ denotes the indicator function.
\end{definition}

\begin{table}[t]
\caption{Feature sparsity across benchmarks and custom datasets.}
\label{tab:sparsity}
\centering
\footnotesize
\setlength{\tabcolsep}{3pt}
\begin{tabular}{lcccc}
\toprule
\textbf{Dataset} & \textbf{\#Nodes} & \textbf{\#Features} & \textbf{Sparsity} $\downarrow$ & \textbf{Type of features} \\
\midrule
\cora      & 2708 & 1433 & 0.9873 & BoW (binary) \\
\citeseer  & 3327 & 3703 & 0.9915 & BoW (binary) \\
\pubmed    & 19717 &  500 & 0.8998 & BoW (binary) \\
\midrule
\synthetic & 1000 &    5 & 0.0000 & Gaussian \\
\airiot    & 430  &    7 & 0.1615 & Raw \\
\electric  & 2000 &    5 & 0.2000 & Raw \\
\tadpole   & 555  &   15 & 0.0000 & Raw \\
\bottomrule
\end{tabular}
\end{table}

The sparsity values of the benchmark datasets are reported in Table \ref{tab:sparsity} (first three rows). All datasets exhibit substantial sparsity, with more than 50\% of features being zero across all the datasets, with Citeseer reaching an extreme sparsity level of approximately 99\%.
This raises a  crucial question: does it make sense to evaluate models designed to handle missing features on datasets where the feature representations are already extremely sparse? In such sparse settings, a high probability of missingness is needed to induce a meaningful information loss. Otherwise, the observed model performance under missingness may reflect artifacts of the dataset rather than the robustness of the method. We formalize this observation in the following theorem.

\begin{restatable}{theorem}{sparse}
\label{prop:mi}
Let $\mathbf X\in\mathbb{R}^{n\times d}$ and $\mathbf Y\in\mathcal Y^n$ be random variables, $\mathbf M\in\{0,1\}^{n\times d}$ be a missingness mask and $\Xobs$ denotes the observed (incomplete) data. We encode the pair $(\Xobs,\mathbf M)$ with the random variable
$\tilde{\mathbf X}$ with
\[
\tilde X_{ij} \;=\;
\begin{cases}
X_{ij}, & M_{ij}=0,\\
?, & M_{ij}=1.
\end{cases}
\]
Let the change in the information be defined as $\Delta \;:=\; I(\mathbf Y;\tilde{\mathbf X}) \;-\; I(\mathbf Y;\mathbf X)$, where $I(\cdot;\cdot)$ denotes the mutual information. Then, 
\begin{enumerate}[leftmargin=*, itemsep=0pt, topsep=0pt]
\item If the missingness is label-MAR, then $\Delta \;\le\; 0$.

\item If $\mathbf X\in\{0,1\}^{n\times d}$ and the missingness is U-MCAR with missingness probability $\mu$,
and $s(\mathbf{X})$ is the sample sparsity as in Definition \ref{def:sparsity}, then
\[
-\; nd\,\mu\, h_2\!\big(\mathbb E[s(\mathbf X)]\big) \;\le\; \Delta \;\le\; 0,
\]
where $h_2(u)=-u\log u-(1-u)\log(1-u)$.
\end{enumerate}
\end{restatable}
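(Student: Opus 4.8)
The plan is to establish the two parts separately, since they rely on different ideas.

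For part 1, the key observation is that $\tilde{\mathbf X}$ is a (stochastic) function of $(\mathbf X, \mathbf M)$, so by the data-processing inequality $I(\mathbf Y;\tilde{\mathbf X}) \le I(\mathbf Y;\mathbf X,\mathbf M)$. The chain rule gives $I(\mathbf Y;\mathbf X,\mathbf M) = I(\mathbf Y;\mathbf X) + I(\mathbf Y;\mathbf M\mid\mathbf X)$, so it suffices to show that the label-MAR condition $\Pl(\mathbf M\mid\mathbf X,\mathbf Y)=\Pgl(\mathbf M\mid\mathbf X)$ forces $I(\mathbf Y;\mathbf M\mid\mathbf X)=0$: conditionally on $\mathbf X$, the mask $\mathbf M$ is independent of $\mathbf Y$, which is exactly the vanishing of that conditional mutual information. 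Combining the two bounds yields $\Delta \le 0$. (One subtlety to check: the data-processing step uses that, given $(\mathbf X,\mathbf M)$, the variable $\tilde{\mathbf X}$ carries no extra information about $\mathbf Y$ — true because $\tilde{\mathbf X}$ is a deterministic function of $(\mathbf X,\mathbf M)$.)

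For part 2, the upper bound $\Delta\le 0$ is inherited from part 1 since U-MCAR implies label-MAR. For the lower bound, I would write $\Delta = I(\mathbf Y;\tilde{\mathbf X}) - I(\mathbf Y;\mathbf X) = H(\mathbf Y\mid\tilde{\mathbf X}) - H(\mathbf Y\mid\mathbf X) = I(\mathbf X;\mathbf Y\mid\tilde{\mathbf X}) - \text{(nonneg.)}$... more cleanly: use $I(\mathbf Y;\mathbf X) \le I(\mathbf Y;\tilde{\mathbf X},\mathbf X) = I(\mathbf Y;\tilde{\mathbf X}) + I(\mathbf Y;\mathbf X\mid\tilde{\mathbf X})$, hence $-\Delta \le I(\mathbf Y;\mathbf X\mid\tilde{\mathbf X}) \le H(\mathbf X\mid\tilde{\mathbf X})$. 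Now $H(\mathbf X\mid\tilde{\mathbf X})$ only counts the uncertainty in the masked entries: conditioned on $\tilde{\mathbf X}$, the observed entries are fixed, and the masked positions are known (they are exactly the ``?'' slots), so $H(\mathbf X\mid\tilde{\mathbf X}) = H(\Xmiss\mid\Xobs,\mathbf M)$. Because the entries are binary, each masked entry contributes at most $h_2$ of its conditional Bernoulli parameter; subadditivity of entropy bounds the total by $\sum_{i,j}\mathbb E[M_{ij}]\,h_2(p_{ij})$ for suitable conditional probabilities $p_{ij}$. Under U-MCAR, $M_{ij}$ is independent of $\mathbf X$ with $\mathbb E[M_{ij}]=\mu$, and concavity of $h_2$ (Jensen) lets me replace the per-entry probabilities by the average, which is tied to $\mathbb E[s(\mathbf X)]$: the fraction of zero entries. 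Summing over the $nd$ entries gives $H(\mathbf X\mid\tilde{\mathbf X}) \le nd\,\mu\,h_2(\mathbb E[s(\mathbf X)])$, hence $-\Delta \le nd\,\mu\,h_2(\mathbb E[s(\mathbf X)])$, which is the claimed lower bound.

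The main obstacle is the careful bookkeeping in the Jensen/concavity step of part 2: I need to argue that $\sum_{i,j} h_2(\Pr[X_{ij}=1])$, or the appropriate conditional version, is maximized — given the constraint that the overall expected fraction of ones is $1-\mathbb E[s(\mathbf X)]$ — when all the per-entry probabilities are equal. This is where concavity of $h_2$ and the definition of $s(\mathbf X)$ as an average over all $nd$ entries must be combined correctly; the binary assumption $\mathbf X\in\{0,1\}^{n\times d}$ is essential here, as is the independence of $\mathbf M$ from $\mathbf X$ under U-MCAR so that the masked positions are a uniform random subsample and carry the same marginal statistics. Everything else — data-processing, chain rule, subadditivity — is routine.
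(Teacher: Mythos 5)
Your plan is correct and follows essentially the same route as the paper's proof: data-processing plus the chain rule and $I(\mathbf Y;\mathbf M\mid\mathbf X)=0$ under label-MAR for part 1, and for part 2 the bound $-\Delta \le I(\mathbf Y;\mathbf X\mid\tilde{\mathbf X}) \le H(\mathbf X\mid\tilde{\mathbf X})$, reduced to per-entry binary entropies ($\mu\,h_2(\pi_{ij})$ per cell under U-MCAR) and combined via concavity of $h_2$. The only detail left implicit is that the average success probability equals $1-\mathbb E[s(\mathbf X)]$, so the symmetry $h_2(u)=h_2(1-u)$ is what yields the stated bound in terms of $\mathbb E[s(\mathbf X)]$.
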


\paragraph{Intuition.} When node features are extremely sparse (e.g., BoW/TF-IDF), the information loss induced by missingness is provably \ff{bounded by a quantity that vanishes with sparsity,} 
unless missingness is extremely high. As a result, existing sparse benchmarks inherently make all methods appear robust, preventing meaningful comparison.

The proof can be found in Appendix \ref{app:proofs}. Theorem \ref{prop:mi} demonstrates that when feature sparsity is high, a very large amount of missingness is required to produce a meaningful loss of information.
This confirms that such benchmarks do not meaningfully differentiate between approaches, casting doubt on their suitability for evaluating GNNs under feature missingness.
As a consequence, we argue for the use of datasets where missingness poses a real challenge. In particular, we introduce a set of four alternative datasets, one new synthetic and three real-world. 
More details about the datasets are reported in Appendix~\ref{app:challenging_datasets}.

\paragraph{(1) A synthetic dataset tailored to controlled missingness.}
We construct a dataset based on a Barabási–Albert graph topology, where node features are sampled from a Gaussian distribution. Node labels are assigned using a fixed two-layer GCN applied to the full, complete features, ensuring that a GNN model has the capacity to achieve high classification accuracy in the absence of missingness. This controlled setting provides a testbed for isolating the effects of missingness under varying sparsity, while maintaining a well-defined ground truth.

\paragraph{(2) Real-world datasets with semantically meaningful features.}
We also advocate for the use of real datasets in which node features correspond to raw, observable properties: 1) \textbf{\airiot}~\citep{zheng2015forecasting}, a sensor network dataset from IoT applications, where node features correspond to environmental measurements and node labels indicate sensor status categories; 2) \textbf{\electric}~\citep{birchfield2016grid,baek2023tuning}, a dataset of interconnected electrical sensors, with real-valued measurements as features and operational condition classification as the target task; 3) \textbf{\tadpole}~\citep{zhu2019multi}, a medical graph dataset derived from the TADPOLE challenge, where each node represents a patient, node features include clinical and imaging biomarkers, and the goal is to predict diagnostic labels.

\begin{table}[!t]
\centering
\caption{Evaluation of P1 (feature-structure separability) and P2 (feature-structure complementarity) on our custom datasets. Each cell reports the KS statistic, with all $p$-values ranging from $1.93e-14$ to $8.80e-62$, for separability under six perturbation settings. $\gamma_{1,1}$ indicates the feature-structure complementarity. Datasets satisfying each property (as per~\citet{coupette2025no}) are marked with \checkmark.}
\label{tab:separability}
\footnotesize
\setlength{\tabcolsep}{3pt}
\begin{tabular}{lcccc}
\toprule
\textbf{Setting} & \textbf{\synthetic} & \textbf{\airiot} & \textbf{\electric} & \textbf{\tadpole} \\
\midrule
Empty Feat.     & 1.00 & 1.00 & 1.00 & 1.00  \\
Random Feat.    & 1.00 & 1.00 & 1.00 & 0.90  \\
Complete Feat.  & 1.00 & 1.00 & 1.00 & 0.61  \\
Empty Graph     & 1.00 & 0.67 & 0.98 & 0.77  \\
Random Graph    & 1.00 & 1.00 & 1.00 & 1.00  \\
Complete Graph  & 1.00 & 1.00 & 1.00 & 1.00  \\
\midrule
$\boldsymbol{\gamma_{1,1}}$ & 0.62 & 0.68 & 0.69 & 0.64 \\
\textbf{P1} & \checkmark & \checkmark & \checkmark & \checkmark \\
\textbf{P2} & \checkmark & \checkmark & \checkmark & \checkmark \\
\bottomrule
\end{tabular}
\end{table}

Both the synthetic and real-world datasets exhibit low feature sparsity (Table~\ref{tab:sparsity}), a necessary condition for studying missingness. However, sparsity alone is not sufficient: suitable datasets must also ensure that both features and structure are task-informative and interact non-trivially. We assess this using the RINGS framework~\citep{coupette2025no}, which measures performance separability through KS statistics under perturbations (e.g., removing all edges or replacing features with noise) and complementarity of the topology of features through the normalized Gromov–Wasserstein distance $\gamma_{1,1}$ between the structural and feature-induced metric spaces  (values greater than 0.5 are considered satisfactory).
As shown in Table~\ref{tab:separability}, all proposed datasets satisfy both mode complementarity and performance separability. Combined with their low feature sparsity, these properties make the datasets more suitable than traditional benchmarks for evaluating robustness to incomplete node attributes.

While the real-world datasets we introduce have moderate numbers of nodes and features, they satisfy the key requirements for evaluating robustness to missing node features. In contrast, most commonly used node classification benchmarks, especially larger-scale ones, rely on either extremely sparse feature representations or on learned embeddings, both of which are ill-suited for evaluating robustness to missing node features. Extreme sparsity is problematic in this setting, as shown by Theorem~\ref{prop:mi}. Learned embeddings are likewise unsuitable for two reasons. First, missing entries are unrealistic in artificially constructed representations: embeddings are deterministic outputs of an algorithm, and therefore do not naturally admit feature-level missingness. Second, the information encoded by embeddings is typically distributed across many dimensions in an overparameterized manner~\cite{arora2016latent,arora2018linear}, so that introducing missing features does not meaningfully expose the effects of missingness, making them hard to interpret, \ff{as confirmed by the experiment in Appendix~\ref{app:embeddings}}. To the best of our knowledge, all existing large-scale graph datasets for node classification rely either on extremely sparse feature representations or on learned embeddings, making them ill-suited for studying robustness to missing node features; in Appendix~\ref{app:bigdata} we provide a systematic analysis of existing benchmarks, including large-scale ones, highlighting their limitations in this respect.
\ff{Importantly, while our real-world datasets are of moderate size, this does not constitute a limitation for studying the effects of feature missingness. \ff{We complement them with experiments on the large-scale datasets with real missingness from RelBench~\cite{robinson2024relbench} (Section~\ref{sec:experiments}), and further show that dataset scale does not hinder meaningful evaluation (Appendix~\ref{app:scaling_synthetic}).}}

\section{Beyond Uniform Missingness}
\label{sec:missingness}

Dataset suitability is only one dimension of the evaluation problem. A second, equally important factor is the choice of the missingness mechanism under which models are tested. In the literature, nearly all prior works adopt a masking scheme based on \textit{U-MCAR} mechanism. 
In other works \citep{taguchi2021graph, um2023confidencebased}, a different variant is used where entire feature vectors of randomly selected nodes are masked. We denote this as \textbf{\textit{Structural MCAR (S-MCAR)}}.
These two settings have become the default evaluation standards in the context of graph learning. 
We argue that more challenging and realistic missing data patterns need to be considered for a more informative evaluation of different methods' capabilities. We first introduce a more challenging MCAR mechanism:

\paragraph{\textit{Label–Dependent MCAR (LD-MCAR)}.}
Missingness here is applied at the feature (column) level, assigning higher missingness probability to features $X_{:,j}$ that are more informative for the label, as measured by the mutual information  $I(X_{:,j}; Y)$.
Then, each entry $X_{ij}$ is masked independently with probability $P(M_{ij} = 1) = \rho \cdot I(X_{:,j}; Y)$, where $\rho \in [0, 1]$ is a scaling factor selected to achieve the overall desired expected missingness rate across the dataset. Importantly, this mechanism is still MCAR: the probability that a specific entry is missing does not depend on the actual value of the feature or the label, but only on the mutual information of the feature column and the label. 

In many practical scenarios, missing features are related to their values or to the prediction target~\citep{ghorbani2018embedding, mohan2021graphical,jaeger2022aim,  van2023missing}. For instance, a patient might be less likely to report their weight if it is above a certain threshold. This corresponds to a MNAR mechanism. Testing GNN models exclusively under MCAR conditions fails to capture the challenge of more realistic settings. We therefore propose two different MNAR scenarios:

\paragraph{\textit{Feature-Dependent MNAR (FD-MNAR)}.}
In this mechanism the probability of missingness depends on the value of the feature itself. In particular, we assume that extreme feature values, e.g., high quantiles, are more likely to be missing, as often observed in real-world settings such as healthcare, where abnormal values may be withheld. Formally, for each feature column $j$, let $q^{(\tau)}_j$ denote the $\tau$-quantile of the observed values. We define the missingness probability for entry $X_{ij}$ as:
\[
P(M_{ij} = 1) =
\begin{cases}
\mu^{\text{hi}} & \text{if } X_{ij} \geq q^{(\tau)}_j, \\
\mu^{\text{lo}} & \text{otherwise},
\end{cases}
\]
with $\mu^{\text{hi}} > \mu^{\text{lo}}$ and both selected to match a desired overall missingness rate.

\paragraph{\textit{Class–Dependent MNAR (CD-MNAR)}.}
In this mechanism, features whose values are informative for the label are more likely to be omitted. For example, in medical datasets, patients may be less likely to disclose whether they smoke, a feature strongly associated with the label indicating a history of heart attack. To identify such dependencies, we train a decision tree classifier in a one-vs-rest setting, using the observed features to predict class membership. For each class \( c \in \{1, \dots, C\} \), we extract decision paths that lead to leaf nodes predicting \( c \). These paths define a set of feature-value conditions that contribute to the prediction of class \( c \), which we denote as \( \mathcal{R}_c \). Let \( \texttt{Cond}_c(j, X_{ij}) \) be a predicate that evaluates to true if the value of feature \( j \) for node \( i \) satisfies at least one condition in \( \mathcal{R}_c \). Then, the missingness probability is defined as:
\[
P(M_{ij} = 1 \mid Y_i = c) =
\begin{cases}
\mu^{\text{hi}} & \text{if } \texttt{Cond}_c(j, X_{ij}) = \text{true}, \\
\mu^{\text{lo}} & \text{otherwise},
\end{cases}
\]
where \( \mu^{\text{hi}} > \mu^{\text{lo}} \), and both are selected to meet a target overall missingness rate.

\begin{figure*}[!h]
    \centering
    \includegraphics[width=1\linewidth]{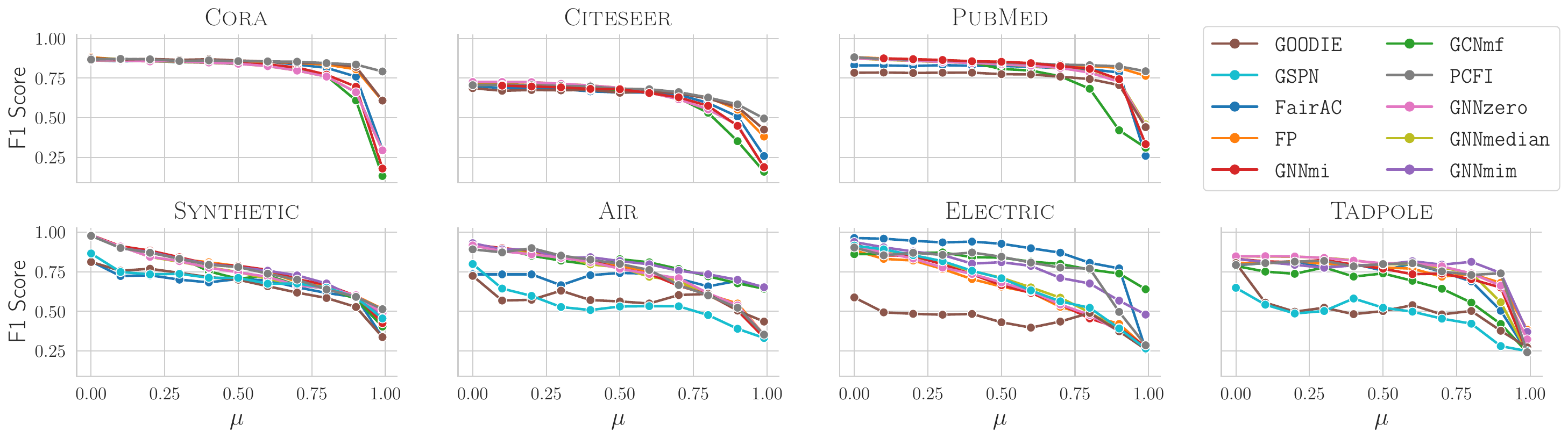}
    \caption{Mean F1-score across 5 runs as a function of the missingness probability $\mu$ on the proposed datasets and established benchmarks. Each panel reports the performance of all models on a specific dataset under the \textbf{\textit{S-MCAR}} setting. The complete tables for all missingness mechanisms are provided in Appendix~\ref{app:all_results}.}
    \label{fig:results}
\end{figure*}

\begin{figure*}[h]
    \centering
    \includegraphics[width=1\linewidth]{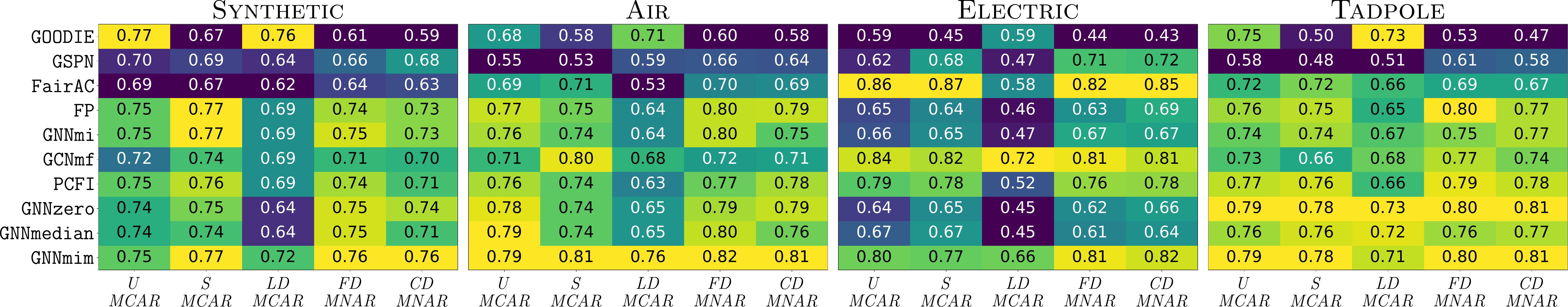}
    \caption{Column-normalized heatmaps showing the AUC (area under the F1 vs. missingness rate $\mu$ curve) for each model, dataset, and missingness mechanism. Higher values (lighter colors) indicate better overall robustness across increasing levels of missingness.}
    \label{fig:rq2}
\end{figure*}

In almost all existing experimental studies the missingness mechanism is the same in training and test data. An exception is \citep{ding2010investigation}, where two types of test data are considered: data that underlies the same missingness as the training data, and complete data. 
We consider a possible distribution shift in $\Pl (\boldsymbol{M}|\boldsymbol{X},\boldsymbol{Y})$ to be an important concern for two reasons: first, it represents a realistic scenario in practical applications. 
For instance, training data may consist of historical patient records where some features are self-reported and selectively omitted for personal reasons, inducing an MNAR missingness mechanism.
At test time, data collection may be fully automated, so that features are either always observed or missing independently of their values (MCAR).
This results in a shift from MNAR missingness during training to complete or MCAR data at test time.

The second reason for considering distribution shifts in $\Pl$ is to assess a possible weakness of \gnnmask: as a model of  the form $   \Pgplus(\boldsymbol{Y} | \Xobs,\boldsymbol{M})$ it explicitly incorporates a model of the missingness mechanism, and thereby could be expected to be less robust under missingness distribution shifts than models that are based on MAR assumptions and (\ref{eq:condmodelmar}) (which would be expected to be robust as long as the mechanism is feature and label MAR in both training and test data).
We therefore define two evaluation regimes (R1 and R2) with and without a shift in the missingness process. Let \( \mu_{\text{tr}}(\mathbf{M} \mid \mathbf{X}, \mathbf{Y}) \) and \( \mu_{\text{te}}(\mathbf{M} \mid \mathbf{X}, \mathbf{Y}) \) denote the missingness distributions in training and testing, respectively.

\paragraph{R1: \textit{i.i.d.\ missingness} (no shift).}
The same missingness mechanism (\textit{U-MCAR}, \textit{S-MCAR}, \textit{LD-MCAR}, \textit{FD-MNAR}, \textit{CD-MNAR}) and rate are applied to training and test data, i.e., $\mu_{\text{tr}} = \mu_{\text{te}}$. 

\paragraph{R2: \textit{missingness distribution shift} (train $\neq$ test).}

In this setting, we evaluate combinations of a training missingness mechanism $M_{\text{tr}} \in \{\text{\textit{FD-MNAR}}, \text{\textit{CD-MNAR}}\}$ with missingness probability $\mu_{\text{tr}} = 50\%$, and a test missingness mechanism $M_{\text{te}} = \text{\textit{U-MCAR}}$ with missingness probability $\mu_{\text{te}} \in \{0\%, 25\%, 50\%\}$.

While many different types of shifts in the missingness mechanism are possible, we focus on this setting as it captures a simple yet realistic scenario: training data affected by data-dependent missingness, followed by test data collected through automated processes, where missingness is either absent or independent of the data.

\section{Experimental Results}
\label{sec:experiments}

We conduct experiments on \ff{node-level tasks} using the datasets introduced in Section~\ref{sec:are_we_evaluating} and the more realistic missingness protocols described in Section~\ref{sec:missingness}. We compare the GNN-based models designed to handle missing features described in Section~\ref{sec:preliminaries}, namely \gnnzero , \gnnmedian , \gnnmi, \gcnmf, \goodie, \gspn, \pcfi, \fp, and \fairac\ as well as our proposed method, \gnnmask. Following the evaluation protocol adopted by these competitors, we perform all main experiments in a transductive setting. However, we note that \gnnmask\ can also be applied in an inductive scenario; for completeness, in Appendix~\ref{app:inductive} we report additional experiments conducted under an inductive setting. For all the experiments, we decide to treat the specific GNN layer type in \gnnmedian\ , \gnnzero\ , \gnnmi\ and \gnnmask\ as a hyperparameter optimized on a validation set. Full implementation details and hyperparameter settings are provided in Appendix~\ref{app:expdet}. The code is provided in the supplementary material. The experiments are designed to answer the following research questions: 
\begin{itemize}[leftmargin=*, itemsep=0pt, topsep=0pt]
    \item \textbf{Q1}: \ff{To what extent does dataset choice influence conclusions about GNN robustness to missing features?}
    \item \textbf{Q2:} How robust are different models for handling incomplete features under different types of missingness?
    \item \textbf{Q3}: Do different models maintain their performance under distribution shifts in missingness between training and test sets?
    \item \textbf{Q4}: \ff{Do our findings on \gnnmask~generalize to large-scale benchmarks with naturally occurring missingness?}  
\end{itemize}

\begin{figure*}[!h]
    \centering
    \includegraphics[width=1\linewidth]{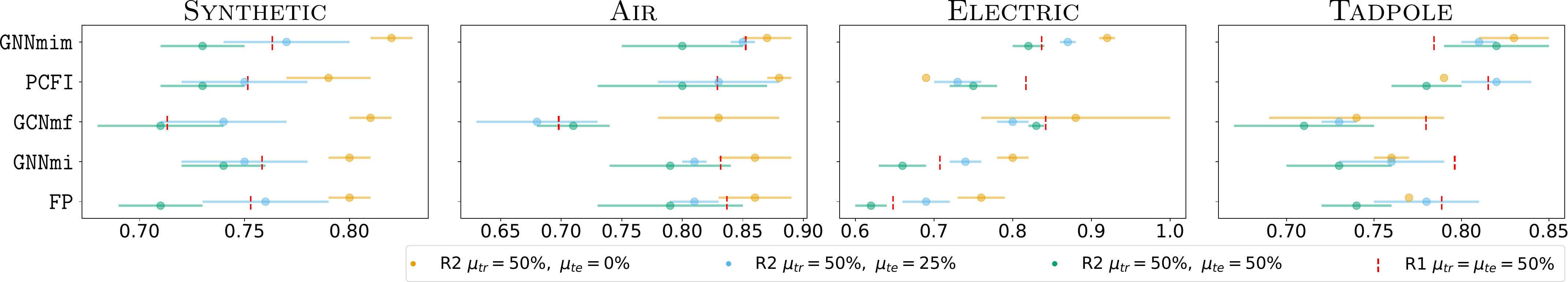}
    \caption{F1 scores (mean ± std over 5 runs) under distribution shifts in missingness between training and test data. All models are trained with \textit{FD-MNAR} missingness at 50\%. Each panel corresponds to a dataset; each row to a model. Colored dots represent test-time F1 under \textit{U-MCAR} with varying missingness rates: yellow = 0\%, blue = 25\%, green = 50\%. Vertical red lines indicate the F1 achieved in the i.i.d. setting (\textit{FD-MNAR} 50\% at both train and test).}
    \label{fig:rq3}
\end{figure*}

\textbf{Q1: } To assess the impact of the dataset on evaluating robustness under different missingness rates, we compute the F1 score for each model as a function of the missingness rate $\mu$. Figure~\ref{fig:results} reports these curves under \textit{Structural MCAR (S-MCAR)} and R1 regimes (see Section~\ref{sec:missingness}) for both the standard benchmarks (\cora, \citeseer, \pubmed) and the datasets we propose (\electric, \airiot, \tadpole, and \synthetic). Results for other missingness mechanisms lead to equal conclusions and are included in Appendix~\ref{app:all_results}. 

On \cora, \citeseer, \pubmed, all models appear robust, as their F1 score remains high across a wide range of
$\mu$, and only drops at very high missingness rates (85-90\%). 
In contrast, on our proposed datasets, performance drops much earlier, often already at low missingness rates. On \tadpole, the degradation is less pronounced at low $\mu$ overall; however, two models, \goodie~ and \gspn, notably diverge from the rest, showing much weaker performance even with limited missingness. These results indicate that evaluating robustness solely on traditional benchmarks can lead to overly optimistic conclusions. To properly assess GNN behavior under different missing rates, it is essential to rely on more challenging datasets.
Importantly, these trends are not an artifact of dataset scale, as we observe identical performance degradation patterns on larger variants of the proposed synthetic dataset, as reported in Appendix~\ref{app:scaling_synthetic}.

\textbf{Q2:} To assess robustness across mechanisms, we compute the area under the F1–missingness curve (AUC) for each dataset, model, and missingness mechanism under R1 regimes (complete F1 results by model, dataset, missingness rate, and mechanism are reported in Appendix~\ref{app:full_tables}). Figure~\ref{fig:rq2} shows AUC heatmaps (lighter colors indicate better performance) for each mechanism and dataset. Many methods are highly sensitive to the missingness type. For instance, \fairac\ performs well under \textit{S-MCAR} on \electric\ (0.870 AUC, best overall) but degrades under \textit{FD-MNAR} on \synthetic\ (0.641, second-last). Similarly, \goodie\ ranks first on \synthetic\ with uniform missingness (0.771) but drops to 0.587 under \textit{CD-MNAR}. Performance under \textit{U-MCAR} is not predictive of robustness under realistic \textit{FD-MNAR}, questioning evaluations based only on uniform or structure-based missingness. \gnnmask\ achieves consistently high AUC across all missingness types and datasets, showing that broad robustness is attainable even with lightweight, non-MAR models. \ff{Beyond GNN-based competitors, \gnnmask\ also substantially outperforms classical iterative imputation (\texttt{MICE}) and non-graph 
baselines (\texttt{MLP+MIM}, \texttt{XGBoost}) under CD-MNAR (Appendix~\ref{app:additional-baselines}).}

\medskip
 \textbf{Q3: }To evaluate model robustness under distribution shifts in missingness, we compute the F1 score (mean ± standard deviation over 5 runs) for each dataset, model, and shift configuration of the R2 regime (Section \ref{sec:missingness}). Full results are in Appendix~\ref{app:full_r2}; Figure \ref{fig:rq3} shows a representative subset of the best-performing models from Q2 (\gnnmask, \gnnmi, \gcnmf, \fp, \pcfi), trained on \textit{FD-MNAR} with $\mu_{\text{tr}}=50\%$ and tested on \textit{U-MCAR} with  $\mu_{\text{te}}\in \{0\%, 25\%,50\%\}$. Similar results hold for other models and for the case where the training missing mechanisms is \textit{CD-MNAR} (Appendix~\ref{app:full_r2}). Each panel shows one dataset, with F1 on the x-axis, models on the y-axis, and color encoding $\mu_{\text{te}}$ (yellow 0\%, blue 25\%, green 50\%). Dots indicate mean F1, horizontal lines standard deviation, and the red vertical bar marks regime R1 with \textit{FD-MNAR} on train and test and $\mu_{\text{tr}}=\mu_{\text{te}}=50\%$. Two findings emerge:


 \begin{enumerate}[leftmargin=*, itemsep=0pt, topsep=0pt]
    \item Distribution-shift generalization is challenging: in most cases, performance under R2 test conditions (\textit{U-MCAR}, 25\%) is lower than in the i.i.d.\ R1 setting, despite less severe test missingness. This occurs when the blue dot ($\mu_{\text{te}}=25\%$) lies left of the red bar ($\mu_{\text{tr}}=\mu_{\text{te}}=50\%$), showing that shifts in missingness create a harder generalization problem not explained by severity alone. The effect is dataset-dependent, reinforcing the need to evaluate robustness across shifts and datasets.
    \item \gnnmask\ is competitive under R2 conditions: across datasets and test missingness levels, it achieves the highest F1 scores (yellow, blue, and green dots farther right), maintaining an advantage over alternative methods.

 \end{enumerate}  

\begin{table*}[!t]
\centering
\caption{
Results on RelBench datasets with naturally occurring missingness.
\#Nodes denotes the total number of rows across all relational tables.
Lower is better for MAE; higher is better for ROC-AUC.
Runtime measures end-to-end wall-clock time per seed (preprocessing, 
text-embedding computation, $10$ training epochs, validation, and test inference) 
on a single GPU; mean $\pm$ std across $3$ seeds. Both methods use the same 
number of training epochs (no early stopping) and identical loaders, so 
the comparison isolates the cost of the MIM augmentation.
}
\label{tab:relbench_real_missingness}
\resizebox{\textwidth}{!}{
\begin{tabular}{llrllcccc}
\toprule
Dataset & Missingness & \#Nodes & Task & Metric 
& \texttt{RDL} & \texttt{RDLmim}
& \texttt{RDL} Runtime (s) & \texttt{RDLmim} \ Runtime (s) \\
\midrule
\multicolumn{9}{l}{\textbf{Node regression}} \\
\midrule
\textsc{Rel-Event} & 12.23\% & 41{,}328{,}337
& user-attendance & MAE
& $0.2628 \pm 0.0020$
& $\mathbf{0.2553 \pm 0.0018}$
& $1512.0 \pm 22.0$
& $1526.2 \pm 6.6$ \\

\textsc{Rel-Trial} & 23.90\% & 5{,}434{,}924
& study-adverse & MAE
& $43.5251 \pm 0.3433$
& $\mathbf{43.1148 \pm 0.1554}$
& $531.7 \pm 7.4$
& $319.6 \pm 67.4$ \\

\textsc{Rel-Trial} & 23.90\% & 5{,}434{,}924
& site-success & MAE
& $0.4235 \pm 0.0096$
& $\mathbf{0.3746 \pm 0.0170}$
& $368.1 \pm 1.1$
& $369.6 \pm 8.7$ \\

\textsc{Rel-Arxiv} & 0.07\% & 2{,}146{,}112
& author-publication & MAE
& $\mathbf{0.5033 \pm 0.0115}$
& $0.5051 \pm 0.0041$
& $348.8 \pm 15.4$
& $310.7 \pm 22.6$ \\

\specialrule{1.1pt}{0.3em}{0.3em}
\multicolumn{9}{l}{\textbf{Node classification}} \\
\midrule
\textsc{Rel-F1} & 9.42\% & 97{,}606
& driver-top3 & ROC-AUC
& $0.755 \pm 0.006$
& $\mathbf{0.768 \pm 0.004}$
& $11.2 \pm 0.8$
& $10.5 \pm 0.1$ \\
\bottomrule
\end{tabular}
}
\end{table*}

\textbf{Q4: }\ff{So far, our evaluation has isolated the effect of different missingness mechanisms by injecting controlled missingness into datasets. We now ask whether the same findings extend to a more realistic setting where missing values occur naturally and the underlying mechanism is unknown. To this end, we evaluate on RelBench~\citep{robinson2024relbench}, a collection of large-scale temporal relational graphs constructed from relational databases~\citep{ferrini2024meta,ferrini2025a}. We select tasks from databases with the highest observed missingness and include \textsc{Rel-Arxiv}, which is nearly complete, as a control case. Existing methods for missing node features considered above are designed for static graphs and cannot be directly applied to RelBench, whose graphs are temporal and multi-relational. We therefore use \texttt{RDL}~\citep{fey2024position}, the SOTA GNN-based model on RelBench. The default \texttt{RDL} pipeline handles missing values through mean imputation; we compare it against the same architecture augmented with a simple MIM component, i.e., by concatenating each feature vector with its binary missingness mask (we refer to this as \texttt{RDLmim}). Table~\ref{tab:relbench_real_missingness} shows that the MIM augmentation improves over the \texttt{RDL} baseline on all datasets with non-negligible naturally occurring missingness, while \texttt{RDL} is slightly better on the nearly complete control dataset. This supports our main conclusion: when the missingness mechanism is unknown, as in real relational databases, explicitly exposing the missingness pattern is a robust and assumption-free choice. The comparable end-to-end training and inference times further show that this augmentation scales to large graphs without introducing substantial overhead; full setup details are reported in Appendix~\ref{app:relbench_setup}.}

\section{Conclusion and Future Work}

We revisited the problem of learning GNNs under missing node features, highlighting fundamental limitations of current evaluation protocols, namely the reliance on inadequate benchmarks and oversimplified missingness mechanisms. To address these issues, we introduced new datasets with dense, informative features and more realistic missingness patterns, and proposed \gnnmask, a simple yet effective method that explicitly models missingness through the missing-indicator approach. Our experiments show that \gnnmask\ is competitive with respect to more complex architectures across datasets, missingness types, and train–test shifts. This work calls for a shift towards more realistic evaluation settings and demonstrates that lightweight yet principled strategies can achieve robustness in challenging scenarios. Our study show the need for larger benchmarks specifically designed for missing features, aligning with recent calls for better graph datasets~\citep{bechler2025position}, and reveals that there remains room for developing models that are robust to diverse types of missingness.

\ff{\paragraph{Limitations}
This work focuses on node-level tasks (classification and regression) with missing node features, leaving other graph learning tasks, such as link prediction~\citep{lachi2024a,lachi2026bridging} and graph classification~\citep{errica2019fair}, to future work. Finally we assume that the graph structure and training labels are observed, and do not address settings with missing edges, uncertain topology, or missing labels.}


\clearpage

\bibliography{example_paper}
\bibliographystyle{icml2026}

\newpage
\appendix
\onecolumn

\section{Proofs}
\label{app:proofs}

\ignorable*

\begin{proof}
  \begin{displaymath}
    \Ptgl(\boldsymbol{Y}|\boldsymbol{X},\boldsymbol{M}) =
    \Pl(\boldsymbol{M}|\boldsymbol{X},\boldsymbol{Y})\frac{\Pg(\boldsymbol{Y}|\boldsymbol{X})}{\Pgl(\boldsymbol{M}|\boldsymbol{X})}
    \stackrel{(\ref{eq:lmar})}{=} \Pg(\boldsymbol{Y}|\boldsymbol{X})
  \end{displaymath}
  \begin{displaymath}
    \Ptgl(\Xmiss|\Xobs,\boldsymbol{M}  ) =
    \Pgl(\boldsymbol{M}|\Xobs,\Xmiss  )\frac{\Pt(\Xmiss|\Xobs )}{\Ptgl(\boldsymbol{M}|\Xobs  )}
    \stackrel{(\ref{eq:fmar})}{=} \Pt(\Xmiss|\Xobs )
  \end{displaymath}
\end{proof}

\sparse*

\begin{proof}
By construction $\tilde{\mathbf X}=g(\mathbf X,\mathbf M)$ for some measurable $g$.
Thus $(\mathbf Y) \to (\mathbf X,\mathbf M) \to \tilde{\mathbf X}$ is a Markov chain, and the data–processing inequality implies
\begin{equation}\label{eq:dpi}
I(\mathbf Y;\tilde{\mathbf X}) \;\le\; I(\mathbf Y;\mathbf X,\mathbf M).
\end{equation}
Moreover, for any three random elements $(A,B,C)$ we have the chain–rule identities
\begin{align}
I(A;B,C) &= I(A;C) + I(A;B\mid C). \label{eq:cr1}
\end{align}

\medskip
\noindent\textbf{(1) Label-MAR $\Delta\le 0$.}
Assume label-MAR: $\mathbb P(\mathbf M\mid \mathbf X,\mathbf Y)=\mathbb P(\mathbf M\mid \mathbf X)$, which is equivalent to $\mathbf Y \perp \mathbf M \mid \mathbf X$.
Applying \eqref{eq:cr1} with $(A,B,C)=(\mathbf Y,\mathbf X,\mathbf M)$,
\[
I(\mathbf Y;\mathbf X,\mathbf M) \;=\; I(\mathbf Y;\mathbf X) + I(\mathbf Y;\mathbf M\mid \mathbf X).
\]
Under label-MAR, $I(\mathbf Y;\mathbf M\mid \mathbf X)=0$, hence
\begin{equation}\label{eq:lmar-equality}
I(\mathbf Y;\mathbf X,\mathbf M) \;=\; I(\mathbf Y;\mathbf X).
\end{equation}
Combining \eqref{eq:dpi} and \eqref{eq:lmar-equality} yields
\[
I(\mathbf Y;\tilde{\mathbf X}) \;\le\; I(\mathbf Y;\mathbf X)
\quad\Longleftrightarrow\quad
\Delta \;=\; I(\mathbf Y;\tilde{\mathbf X})-I(\mathbf Y;\mathbf X) \;\le\; 0.
\]

\medskip
\noindent\textbf{(2) Two-sided bound under uniform MCAR and $\alpha$-$\beta$ sparsity.}
Assume uniform MCAR: $M_{ij}\sim\mathrm{Bernoulli}(1-\mu)$ independently of $(\mathbf X,\mathbf Y)$ and i.i.d.\ across $(i,j)$, and that
$\mathbb P\big(s(\mathbf X)\ge \alpha\big)\ge \beta$, where
$s(\mathbf X)=\frac{1}{nd}\sum_{i,j}\mathbb I\{X_{ij}=0\}$.

\emph{Upper side.} MCAR implies label-MAR, so by part (1): $\Delta\le 0$.

\emph{Lower side.}
We start from the chain–rule identity applied to $(A,B,C)=(\mathbf Y,\mathbf X,\tilde{\mathbf X})$:
\[
I(\mathbf Y;\mathbf X,\tilde{\mathbf X})
= I(\mathbf Y;\tilde{\mathbf X}) + I(\mathbf Y;\mathbf X\mid \tilde{\mathbf X})
= I(\mathbf Y;\mathbf X) + I(\mathbf Y;\tilde{\mathbf X}\mid \mathbf X).
\]
Rearranging gives
\begin{equation}\label{eq:delta-decomp}
-\Delta \;=\; I(\mathbf Y;\mathbf X) - I(\mathbf Y;\tilde{\mathbf X})
\;=\; I(\mathbf Y;\mathbf X\mid \tilde{\mathbf X}) - I(\mathbf Y;\tilde{\mathbf X}\mid \mathbf X).
\end{equation}
The second term on the right is nonnegative, hence
\begin{equation}\label{eq:minus-delta-upper-I}
-\Delta \;\le\; I(\mathbf Y;\mathbf X\mid \tilde{\mathbf X}).
\end{equation}
Using the bound $I(U;V\mid W) \le H(V\mid W)$, we get
\begin{equation}\label{eq:minus-delta-upper-H}
-\Delta \;\le\; H(\mathbf X\mid \tilde{\mathbf X}).
\end{equation}

Index the matrix entries by a total order $\prec$ on pairs $(i,j)$ and apply the chain rule:
\[
H(\mathbf X\mid \tilde{\mathbf X})
= \sum_{(i,j)} H\!\big(X_{ij} \,\big|\, \tilde{\mathbf X},\, \{X_{kl}:(k,l)\prec(i,j)\}\big).
\]
Since conditioning reduces entropy,
\begin{equation}\label{eq:cond-red}
H(\mathbf X\mid \tilde{\mathbf X})
\;\le\; \sum_{i,j} H\!\big(X_{ij}\mid \tilde X_{ij}\big).
\end{equation}

Fix $(i,j)$ and denote $\pi_{ij}=\Pr[X_{ij}=1]$.
Under uniform MCAR,
\[
\Pr[\tilde X_{ij}=?]=\mu,\qquad
\Pr[\tilde X_{ij}=x]=(1-\mu)\Pr[X_{ij}=x],\quad x\in\{0,1\}.
\]
Hence:
(i) if $\tilde X_{ij}\in\{0,1\}$ then $X_{ij}$ is revealed, so $H(X_{ij}\mid \tilde X_{ij}\in\{0,1\})=0$;
(ii) if $\tilde X_{ij}=?$, then $\Pr[X_{ij}=1\mid \tilde X_{ij}=?]=\pi_{ij}$ and
$H(X_{ij}\mid \tilde X_{ij}=?)=h_2(\pi_{ij})$.
Averaging over $\tilde X_{ij}$ gives
\begin{equation}\label{eq:per-cell}
H(X_{ij}\mid \tilde X_{ij}) \;=\; \mu\, h_2(\pi_{ij}).
\end{equation}

Combining \eqref{eq:cond-red} and \eqref{eq:per-cell}:
\[
H(\mathbf X\mid \tilde{\mathbf X})
\;\le\; \sum_{i,j} \mu\, h_2(\pi_{ij})
\;=\; nd\,\mu \cdot \frac{1}{nd}\sum_{i,j} h_2(\pi_{ij})
\;\le\; nd\,\mu \cdot h_2\!\left(\frac{1}{nd}\sum_{i,j} \pi_{ij}\right),
\]
since $h_2$ is concave.
Note that
\[
\frac{1}{nd}\sum_{i,j}\pi_{ij}
= \frac{1}{nd}\sum_{i,j}\Pr[X_{ij}=1]
= \mathbb E\!\left[\frac{1}{nd}\sum_{i,j}\mathbb I\{X_{ij}=1\}\right]
= 1 - \mathbb E[s(\mathbf X)].
\]
Using the symmetry $h_2(u)=h_2(1-u)$, we conclude
\[
H(\mathbf X\mid \tilde{\mathbf X}) \;\le\; nd\,\mu \cdot h_2\!\big(\mathbb E[s(\mathbf X)]\big).
\]
Combining with $-\Delta \le H(\mathbf X\mid \tilde{\mathbf X})$ gives
\[
-\; nd\,\mu \, h_2\!\big(\mathbb E[s(\mathbf X)]\big) \;\le\; \Delta \;\le\; 0.
\]
This concludes the proof.
\end{proof}

\begin{remark}
\ff{While loose in general, the bound becomes tight under specific conditions that align with the proof steps:}
\begin{enumerate}
    \item[(i)] \ff{If $\mathbf{Y}$ is a deterministic, injective function of $X$, then the relaxation from \eqref{eq:minus-delta-upper-I} to \eqref{eq:minus-delta-upper-H} is exact.
    \item[(ii)] If the feature entries $\mathbf{X}_{i,j}$ are independent, the decomposition in \eqref{eq:cond-red} holds with equality.
    \item[(iii)] If all $\mathbf{X}_{i,j}$ share the same marginal distribution, the bound following \eqref{eq:per-cell} is also tight.}
\end{enumerate}

\ff{While these conditions are very strong and not fully realistic, they provide useful insights into the nature of the bound. For missing data to incur a large information loss, there has to be significant mutual information to begin with. This is captured by condition~(i): at the opposite extreme, if $\mathbf{Y}$ and $\mathbf{X}$ are independent, then $\Delta = 0$, and the lower bound is very loose. Conditions~(ii) and~(iii) are independence and uniformity conditions that allow us to express the effect of missing feature values in terms of simple summary statistics of expected sparsity. Thus, while not incorporating all potentially relevant factors, our bound focuses on the effect of sparsity and shows that, in highly sparse settings, sparsity alone can fundamentally limit the impact of missingness, independently of the task. }
\end{remark}

\section{Additional Results on Benchmarks and Proposed Datasets}
\label{app:all_results}



This section presents the full plots of the results under the R1 regime introduced in Section~\ref{sec:missingness}.

Figure~\ref{fig:all_benchmarks} shows the complete set of results across all datasets, whose statistics are summarized in Table~\ref{tab:sparsity}.
The top three rows correspond to the classic benchmarks (\cora, \citeseer, \pubmed).  
Consistently with Proposition~\ref{prop:mi}, models maintain nearly constant F1 scores up to extremely high missingness levels ($\sim 90\%$), confirming that these benchmarks are of limited value for evaluating robustness to missing features.

The bottom four rows correspond to our proposed datasets (\synthetic, \airiot, \electric, \tadpole).  
In these cases, performance degrades much earlier and more severely, highlighting the higher realism and difficulty of our benchmarks.

\begin{figure}[!ht]
    \centering
    \includegraphics[width=\textwidth]{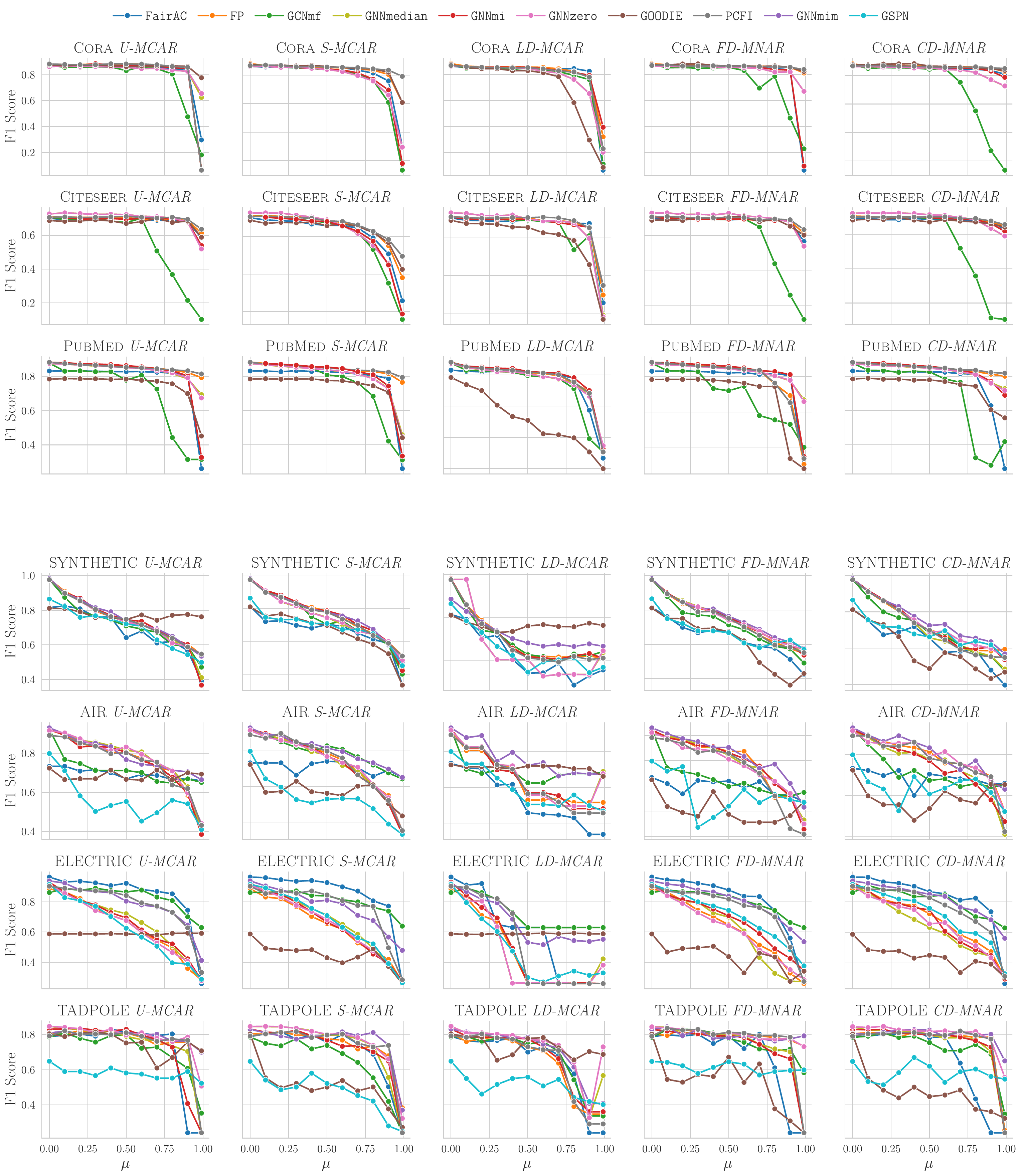}
    \caption{F1 score as a function of feature missingness ($\mu$) for both classic benchmarks (top three rows) and our proposed datasets (bottom four rows), under the mechanisms described in Section~\ref{sec:missingness}. Classic benchmarks show almost no degradation until extremely high $\mu$, while the proposed datasets reveal model weaknesses at more realistic missingness levels. Tables for numeric results are in App. \ref{app:full_tables}}
    \label{fig:all_benchmarks}
\end{figure}

\section{More challenging datasets}

\label{app:challenging_datasets}
In Section~\ref{sec:are_we_evaluating}, we introduced the synthetic and real-world datasets employed in our experiments. 
We now provide additional details on their construction and characteristics.

\paragraph{\synthetic}
Synthetic dataset based on a Barabási--Albert graph topology. 
Each node is associated with five real-valued features sampled from a Gaussian distribution. 
Node labels are generated deterministically by applying a fixed two-layer GCN with hard-coded weights to the complete feature matrix. 
This construction ensures that the ground-truth labeling function is fully expressible by a GNN, allowing models to achieve near-perfect accuracy in the absence of missingness. 
The resulting task is a binary node classification problem, with classes separated according to structured feature combinations defined by the fixed GCN. 
This controlled setup provides a principled testbed to isolate and analyze the effects of different missingness mechanisms, while preserving a well-defined ground truth.

\paragraph{\airiot} 
Dataset~\citep{zheng2015forecasting} built from a network of air quality monitoring stations deployed in an urban area. 
Each node corresponds to a station and is associated with a set of environmental measurements. 
The node features include both air pollutant concentrations (\texttt{CO}, \texttt{NO\textsubscript{2}}, \texttt{PM\textsubscript{10}}, \texttt{O\textsubscript{3}}, \texttt{SO\textsubscript{2}}) and meteorological variables (\texttt{temperature}, \texttt{humidity}, \texttt{wind speed}, \texttt{wind direction}). 
Edges are constructed based on the geographical distance between stations, with two nodes connected if their distance is below a given threshold. 
The target variable is derived from the \texttt{PM\textsubscript{2.5}} concentration, which is discretized into three balanced categories (low, medium, high) according to the distribution of observed values. 
This formulation allows us to frame the problem as a semi-supervised node classification task with three classes.

\paragraph{\electric}
Dataset~\citep{birchfield2016grid,baek2023tuning} derived from a large-scale model of the Texas power grid. 
Nodes correspond to buses in the electrical network, each enriched with both structural and operational attributes. 
The node features include identifiers (\texttt{area}, \texttt{zone}), electrical measurements (\texttt{voltage magnitude}, \texttt{voltage angle}), and a topological property (\texttt{betweenness centrality}). 
Edges are constructed directly from the transmission lines specified in the raw grid data, connecting pairs of buses. 
The classification target is the nominal voltage level of each bus (\texttt{base kV}), which we discretize into three categories: low voltage ($<$100 kV), medium voltage (100--200 kV), and high voltage ($>$200 kV). 
This setup results in a three-class node classification problem reflecting operational conditions across the grid.

\paragraph{\tadpole}
The \tadpole dataset~\citep{zhu2019multi} originates from the TADPOLE challenge, which provides longitudinal clinical and imaging data for patients at risk of developing Alzheimer’s disease. 
In our graph formulation, each node corresponds to a patient and is associated with a set of features encompassing clinical scores, cerebrospinal fluid (CSF) biomarkers, and neuroimaging measures such as MRI- and PET-derived variables. 
Since the original dataset does not provide graph connectivity, we construct edges using a $k$-nearest neighbors approach over the most informative biomarkers, so that patients with similar profiles are connected. 
The target variable is the diagnostic label, categorized into three classes (cognitively normal, mild cognitive impairment, Alzheimer’s disease). 
This results in a semi-supervised node classification problem where the goal is to predict the diagnostic status of patients based on multimodal biomedical features and patient similarity structure.

Table \ref{tab:sparsity} reports, for each dataset, the number of nodes, number of features, feature sparsity, and the type of features. While the number of nodes and features may seem small compared to standard benchmark graph datasets, we emphasize that using real features (as in \airiot, \electric, and \tadpole) is more realistic in the context of feature missingness. In fact, it is not meaningful to study missingness on pre-computed embeddings, since embeddings are typically high-dimensional representations mapped to wide feature spaces and are not expected to exhibit missingness in practice.

\section{Why Existing Large-Scale Node Classification Datasets Are Unsuitable for Studying Robustness to Feature Missingness}\label{app:bigdata}
In this appendix, we provide an analysis of existing large-scale graph datasets for node classification and explain why they are not suitable for studying robustness to missing node features. Our goal is not to argue that these datasets are flawed in general, but rather that they violate key requirements that are necessary for a meaningful evaluation of GNN under feature missingness.

We focus on two fundamental prerequisites. First, node features must be dense, low-dimensional, and semantically meaningful. Feature missingness is only well-defined and realistic when features correspond to directly observed attributes whose absence has a clear interpretation. Datasets with extremely sparse representations, such as bag-of-words or TF–IDF features, already encode the absence of information by construction, making additional missingness largely inconsequential (see Section~\ref{sec:are_we_evaluating}. Similarly, learned embeddings are ill-suited for this setting: they are artificially constructed latent representations in which semantic information is deliberately distributed across many dimensions in an overparameterized manner, so that individual features are not interpretable in isolation \citep{arora2016latent,arora2018linear}. As a result, feature-level missingness in embeddings is ill-defined and does not correspond to a realistic missing-data mechanism. We argue that datasets failing to satisfy this first criterion should not be used to evaluate robustness to missing node features.

Second, suitable datasets must exhibit non-trivial predictive signal under complete information and complementary and separable contributions of node features and graph structure. If either features or structure alone are sufficient for high performance, or if one of them is largely uninformative, then performance under missingness becomes difficult to interpret, as degradation may reflect dataset artifacts rather than genuine robustness properties of the model. Moreover, a reasonable baseline performance of a standard GNN under complete information is necessary to ensure that the task itself is well-posed before introducing missingness.

Based on these criteria, we analyze a broad set of widely used benchmarks, including classical node classification datasets and more recent large-scale benchmarks proposed in the literature, such as those from the OGB collection. Table~\ref{tab:appdataset} evaluates whether existing large-scale node classification datasets satisfy the minimal necessary condition for studying robustness to missing node features, namely the presence of dense, low-dimensional, and semantically meaningful node attributes. We find that none of the considered benchmarks meet this requirement: node features are either highly sparse (e.g., bag-of-words or categorical encodings), embedding-based, or entirely absent. When this condition is violated, feature-level missingness is ill-defined and its effect on model performance becomes inherently uninformative, independently of dataset scale. As a result, we do not further assess separability, complementarity, or baseline performance on these datasets, as the problem of feature missingness is already ill-posed at the feature level. This observation supports our choice of moderate-scale datasets with dense, semantically meaningful features, and aligns with recent critiques of current benchmarking practices in graph learning \citep{bechler2025position}.

\begin{table}[h]
    \centering
    \begin{tabular}{llcccc}
    \toprule
         & &  \textbf{\#Features} & \textbf{Type of Features} & \textbf{Sparsity} & \textbf{Suitable for} \\
         & & & & & \textbf{Miss. Analysis}\\ 
    \midrule
    \multirow{6}{*}{GraphLand~\cite{bazhenov2025graphland}}
    &hm-categories & 35 & \textcolor{ForestGreen}{Cat.  \& Num.} & \textcolor{red}{0.7} & \xmark\\
    &tolokers-2 & 16 & \textcolor{ForestGreen}{Cat.  \& Num.} & \textcolor{red}{0.5} & \xmark\\
    &Web-topics & 263 & \textcolor{ForestGreen}{Cat.  \& Num.} & \textcolor{red}{0.7} & \xmark\\
    &city-reviews & 37 & \textcolor{ForestGreen}{Cat.  \& Num.} & \textcolor{red}{0.9}& \xmark\\
    &artnet-exp & 75 & \textcolor{ForestGreen}{Cat.  \& Num.} & \textcolor{red}{0.4}& \xmark\\
    &pokec-regions & 56 & \textcolor{ForestGreen}{Cat.  \& Num.} & \textcolor{red}{0.7} & \xmark\\
    \midrule
    \multirow{3}{*}{OGB~\cite{hu2020open}} 
    &ogbn-arxiv  & 128&  \textcolor{red}{Embeddings} & \textcolor{ForestGreen}{0.0} & \xmark \\
    &ogbn-Products & 100 & \textcolor{red}{BoW} & \textcolor{ForestGreen}{0.1} & \xmark \\
    &ogbn-Proteins & 0 & \textcolor{red}{-} & \textcolor{red}{-} & \xmark \\    
    &ogbn-Mag & 128 & \textcolor{red}{Embeddings} & \textcolor{ForestGreen}{0.0} & \xmark \\    
    \midrule
    GraphBench~\cite{stoll2025graphbench}&Max Clique & 0 & \textcolor{red}{-} & \textcolor{red}{-} & \xmark \\    
    \bottomrule
    
    \end{tabular}
    \caption{For each benchmark, we report the number of node features, feature type, and feature sparsity. The table assesses whether datasets satisfy the minimal necessary condition for feature-level missingness analysis, namely the presence of dense, low-dimensional, and semantically meaningful node attributes. Datasets with highly sparse representations, embedding-based features, or no node features are marked as unsuitable, as feature-level missingness is ill-defined in these settings, independently of dataset scale.}
    \label{tab:appdataset}
\end{table}

\section{Experimental Details}\label{app:expdet}
All baseline and competitor methods are implemented using the official code released in their respective repositories, following the recommended training protocols and hyperparameter settings. 
For \gnnmi\ and \gnnmask, \ff{as well as for \texttt{FP} and \texttt{PCFI}, where the GNN backbone is separable from the imputation strategy,} we adopt a standard GNN architecture where the convolutional layer type (Table~\ref{tab:hyperparam}), 
the number of layers (1-3), the learning rate ($10^{-4}$-$10^{-2}$), and the weight decay ($10^{-5}$-$10^{-3}$) are tuned via grid search on the validation set. 
\ff{For \texttt{GCNmf}, \texttt{GSPN}, \texttt{FairAC}, and \texttt{GOODIE}, instead, the GNN backbone is an integral part of the architecture and cannot be modified without altering the method itself, so we use the backbone specified in the original implementations.}
All models are trained on the same data splits with early stopping to ensure a fair comparison.

\begin{table}[ht]
\centering
\caption{Best GNN encoder selected within \gnnmask, \texttt{FP} and \texttt{PCFI} for each dataset and missingness mechanism.}
\label{tab:hyperparam}
\footnotesize
\setlength{\tabcolsep}{3pt}
\begin{tabular}{llccccc}
\toprule
\textbf{Method} & \textbf{Dataset} & \textit{U-MCAR} & \textit{S-MCAR} & \textit{LD-MCAR} & \textit{FD-MNAR} & \textit{CD-MNAR} \\
\midrule
\multirow{4}{*}{\gnnmask}
& \synthetic & \texttt{GCN}       & \texttt{GCN}       & \texttt{GraphSAGE} & \texttt{GCN}       & \texttt{GCN} \\
& \airiot    & \texttt{GraphSAGE} & \texttt{GraphSAGE} & \texttt{GraphSAGE} & \texttt{GraphSAGE} & \texttt{GraphSAGE} \\
& \electric  & \texttt{GIN}       & \texttt{GIN}       & \texttt{GraphSAGE} & \texttt{GIN}       & \texttt{GIN} \\
& \tadpole   & \texttt{GCN}       & \texttt{GraphSAGE} & \texttt{GraphSAGE} & \texttt{GraphSAGE} & \texttt{GCN} \\
\midrule
\multirow{4}{*}{\texttt{FP}}
& \synthetic & \texttt{GCN} & \texttt{GCN} & \texttt{GCN} & \texttt{GCN} & \texttt{GCN} \\
& \airiot    & \texttt{GraphSAGE} & \texttt{GraphSAGE} & \texttt{GraphSAGE} & \texttt{GraphSAGE} & \texttt{GraphSAGE} \\
& \electric  & \texttt{GraphSAGE} & \texttt{GraphSAGE} & \texttt{GCN} & \texttt{GraphSAGE} & \texttt{GCN} \\
& \tadpole   & \texttt{GCN} & \texttt{GCN} & \texttt{GCN} & \texttt{GCN} & \texttt{GCN} \\
\midrule
\multirow{4}{*}{\texttt{PCFI}}
& \synthetic & \texttt{GCN} & \texttt{GCN} & \texttt{GCN} & \texttt{GraphSAGE} & \texttt{GCN} \\
& \airiot    & \texttt{GraphSAGE} & \texttt{GraphSAGE} & \texttt{GraphSAGE} & \texttt{GraphSAGE} & \texttt{GraphSAGE} \\
& \electric  & \texttt{GraphSAGE} & \texttt{GCN} & \texttt{GraphSAGE} & \texttt{GraphSAGE} & \texttt{GraphSAGE} \\
& \tadpole   & \texttt{GCN} & \texttt{GCN} & \texttt{GCN} & \texttt{GraphSAGE} & \texttt{GCN} \\
\bottomrule
\end{tabular}
\end{table}




\subsection{RelBench Experimental Setup}
\label{app:relbench_setup}

The setup described above applies to the experiments in Section~\ref{sec:experiments} (Q1--Q3). 
For the RelBench experiments answering Q4, we use the official \texttt{RDL} pipeline 
of~\citet{fey2024position} as our backbone, with two graph layers, hidden dimension $128$, 
sum aggregation, batch normalization, and the Adam optimizer with learning rate $5 \cdot 10^{-3}$. 
Each training run consists of $10$ epochs with batch size $512$ and neighbor sampling 
fanouts $[128, 128]$; no early stopping is applied, so both \texttt{RDL} and \gnnmask\ 
perform exactly the same number of optimization steps per seed. 
At each epoch we evaluate on the validation set and retain the model with the 
best validation metric for test-time evaluation. Text features are encoded with 
a GloVe-based embedder, recomputed at each seed (no caching) to include their 
cost in the runtime measurement.

For the \gnnmask\ variant, we augment each numeric and textual feature with a binary 
mask indicating missing entries before constructing the heterogeneous graph; 
all other components of the pipeline are identical to the baseline. Runtimes 
in Table~\ref{tab:relbench_real_missingness} are measured as end-to-end 
wall-clock time per seed, starting from the model-specific preprocessing and 
ending after test inference, on a single NVIDIA GeForce RTX 4090 (24GB) GPU. We report 
mean and standard deviation across $3$ seeds ($0, 1, 2$). GPU memory usage 
is essentially identical for the two variants, as \gnnmask\ only adds one binary 
column per masked feature.

\section{Scaling the Synthetic Dataset}
\label{app:scaling_synthetic}

In this section, we analyze what happens when either the number of features or the number of nodes in the synthetic dataset is increased. 
To this end, we constructed three additional synthetic datasets (\synthetictwo, \syntheticthree, \syntheticfour) following the same design principles as \synthetic. 
Table~\ref{tab:synthetic_scaling} reports their statistics.

As shown in Figure~\ref{fig:scaling_results}, the behavior of the models in this larger-scale setting is consistent with the one observed in our original setup. 
In this case, we experimented with the \textit{uniform random missingness} mechanism, and we observe a monotonic decrease in performance for all models as the missingness rate $\mu$ increases. 
This confirms that dataset size does not affect the overall trend of performance degradation under feature missingness.

\rebuttal{
To further support this point, we also report the runtime and GPU memory consumption of all models on both the main synthetic dataset (\synthetic) and its larger-scale counterpart (\syntheticthree), which features an increased number of features.
As shown in Table~\ref{tab:synthetic_runtime}, the runtime and memory requirements remain substantially stable across datasets, with negligible variations between models.
This behavior confirms that our approach scales efficiently with the dataset size, as it only involves a standard GNN architecture augmented with a simple MIM mask concatenated to the input features, introducing minimal computational overhead. 
}

\begin{figure}[!ht]  
    \centering
    \includegraphics[width=0.9\textwidth]{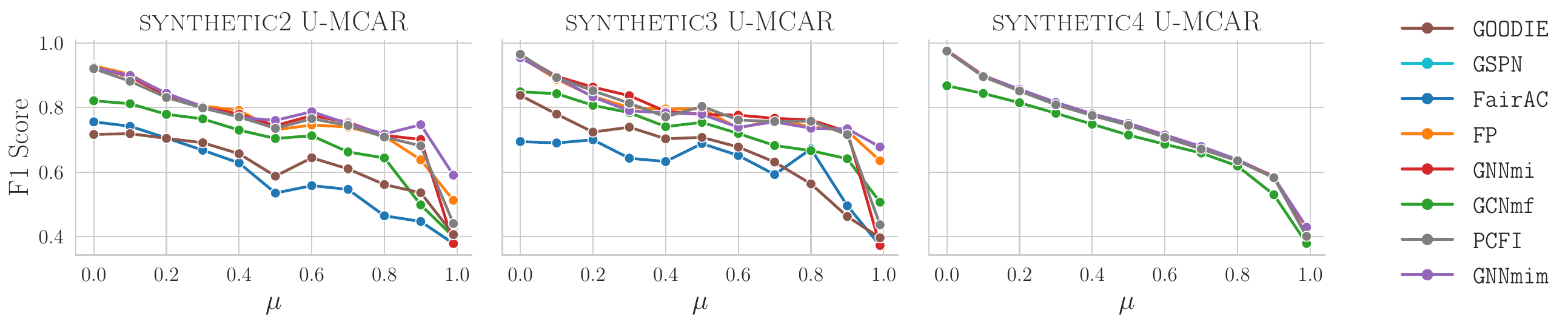}
    \caption{F1 score as a function of feature missingness ($\mu$) for additional synthetic datasets generated with the same procedure as \synthetic, but with either an increased number of nodes or features. 
    For \syntheticfour, the \fairac\ model is not reported since training exceeded the 12-hour time limit, while \goodie\ is excluded due to out-of-memory errors.}
    \label{fig:scaling_results}
\end{figure}

\begin{table}[!ht]
\footnotesize
\setlength{\tabcolsep}{3pt}
\centering
\caption{Datasets information.}

        
        \end{table}

\clearpage

\section{Complete Result Tables – R2 Regime}
\label{app:full_r2}
This appendix complements the analysis of Research Question~3 (Section~\ref{sec:missingness}). 
It reports the complete set of results for the R2 regime, where training and test data are subject to different missingness mechanisms. 
We include both numerical tables (F1-score mean $\pm$ std over 5 runs) and extended visualizations across all models and datasets. 

\subsection{Numerical Results}
Table~\ref{tab:realistic_table_transposed} reports the full F1-scores for all models, datasets, and shift configurations considered in the R2 regime.

\begin{table}[!ht]
\footnotesize
\centering
\caption{F1 (mean $\pm$ std over 5 runs). Setup: \textbf{R2} missingness distribution shift, where training data are subject to either \textit{FD-MNAR} or \textit{CD-MNAR}, while test data have either no missingness, 25\% or 50\% of \textit{U-MCAR}}
\label{tab:realistic_table_transposed}
\renewcommand{\arraystretch}{1.15}
\setlength{\tabcolsep}{3pt}
\resizebox{\textwidth}{!}{%
\begin{tabular}{lcc|*{10}{>{\centering\arraybackslash}m{1.55cm}}}
\toprule
\textbf{Task} & \textbf{Train mech.} & \textbf{$\mu$ Test} & \goodie & \gspn & \fairac & \gcnmf & \pcfi & \fp & \gnnmi & \gnnzero & \gnnmedian & \gnnmask \\
\midrule
\multirow{6}{*}{\synthetic}
  & \textit{FD-MNAR} & 0  & \meanstd{0.50}{0.15} & \meanstd{0.68}{0.01} & \meanstd{0.69}{0.05} & \meanstd{0.81}{0.01} & \meanstd{0.79}{0.02} & \meanstd{0.80}{0.01} & \meanstd{0.80}{0.01} & \meanstd{0.81}{0.02} & \meanstd{0.80}{0.02} & \textbf{\meanstd{0.82}{0.01}} \\
  & \textit{FD-MNAR} & 0.25 & \meanstd{0.47}{0.13} & \meanstd{0.64}{0.03} & \meanstd{0.69}{0.04} & \meanstd{0.74}{0.03} & \meanstd{0.75}{0.03} & \meanstd{0.76}{0.03} & \meanstd{0.75}{0.03} & \meanstd{0.76}{0.01} & \meanstd{0.76}{0.02} & \textbf{\meanstd{0.77}{0.03}} \\
  & \textit{FD-MNAR} & 0.50 & \meanstd{0.47}{0.13} & \meanstd{0.64}{0.02} & \meanstd{0.65}{0.04} & \meanstd{0.71}{0.03} & \meanstd{0.73}{0.02} & \meanstd{0.71}{0.02} & \meanstd{0.74}{0.02} & \meanstd{0.71}{0.03} & \meanstd{0.72}{0.04} & \textbf{\meanstd{0.73}{0.02}} \\
  & \textit{CD-MNAR} & 0 & \meanstd{0.71}{0.07} & \meanstd{0.70}{0.03} & \meanstd{0.70}{0.05} & \meanstd{0.80}{0.04} & \meanstd{0.81}{0.02} & \meanstd{0.80}{0.02} & \meanstd{0.78}{0.02} & \meanstd{0.82}{0.02} & \meanstd{0.76}{0.02} & \textbf{\meanstd{0.85}{0.04}} \\
  & \textit{CD-MNAR} & 0.25 & \meanstd{0.66}{0.05} & \meanstd{0.68}{0.05} & \meanstd{0.68}{0.03} & \meanstd{0.75}{0.06} & \meanstd{0.78}{0.04} & \meanstd{0.77}{0.04} & \meanstd{0.77}{0.02} & \meanstd{0.78}{0.03} & \meanstd{0.72}{0.03} & \textbf{\meanstd{0.80}{0.03}} \\
  & \textit{CD-MNAR} & 0.50  & \meanstd{0.56}{0.10} & \meanstd{0.64}{0.04} & \meanstd{0.65}{0.01} & \meanstd{0.73}{0.02} & \meanstd{0.72}{0.03} & \meanstd{0.72}{0.05} & \meanstd{0.72}{0.01} & \meanstd{0.72}{0.04} & \meanstd{0.70}{0.01} & \textbf{\meanstd{0.75}{0.03}} \\
\midrule
\multirow{6}{*}{\airiot}
  & \textit{FD-MNAR} & 0  & \meanstd{0.50}{0.14} & \meanstd{0.33}{0.04} & \meanstd{0.66}{0.07} & \meanstd{0.83}{0.05} & \textbf{\meanstd{0.88}{0.01}} & \meanstd{0.86}{0.03} & \meanstd{0.86}{0.03} & \meanstd{0.85}{0.01} & \meanstd{0.84}{0.03} & \meanstd{0.87}{0.02} \\
  & \textit{FD-MNAR} & 0.25 & \meanstd{0.51}{0.12} & \meanstd{0.42}{0.04} & \meanstd{0.65}{0.08} & \meanstd{0.68}{0.05} & \meanstd{0.83}{0.05} & \meanstd{0.81}{0.02} & \meanstd{0.81}{0.01} & \meanstd{0.83}{0.01} & \meanstd{0.80}{0.02} & \textbf{\meanstd{0.85}{0.01}} \\
  & \textit{FD-MNAR} & 0.50 & \meanstd{0.52}{0.11} & \meanstd{0.55}{0.03} & \meanstd{0.70}{0.03} & \meanstd{0.71}{0.03} & \textbf{\meanstd{0.80}{0.07}} & \meanstd{0.79}{0.06} & \meanstd{0.79}{0.05} & \meanstd{0.78}{0.04} & \meanstd{0.78}{0.01} & \meanstd{0.80}{0.05} \\
  & \textit{CD-MNAR} & 0  & \meanstd{0.56}{0.16} & \meanstd{0.35}{0.02} & \meanstd{0.65}{0.08} & \meanstd{0.60}{0.20} & \textbf{\meanstd{0.88}{0.01}} & \meanstd{0.71}{0.07} & \meanstd{0.86}{0.06} & \meanstd{0.83}{0.07} & \meanstd{0.82}{0.03} & \meanstd{0.85}{0.00} \\
  & \textit{CD-MNAR} & 0.25  & \meanstd{0.56}{0.16} & \meanstd{0.45}{0.50} & \meanstd{0.70}{0.05} & \meanstd{0.70}{0.05} & \meanstd{0.84}{0.05} & \meanstd{0.75}{0.05} & \meanstd{0.84}{0.04} & \meanstd{0.80}{0.05} & \meanstd{0.79}{0.03} & \textbf{\meanstd{0.84}{0.06}} \\
  & \textit{CD-MNAR} & 0.50  & \meanstd{0.62}{0.07} & \meanstd{0.47}{0.04} & \meanstd{0.68}{0.07} & \meanstd{0.70}{0.02} & \textbf{\meanstd{0.80}{0.05}} & \meanstd{0.72}{0.03} & \meanstd{0.76}{0.05} & \meanstd{0.76}{0.01} & \meanstd{0.74}{0.03} & \meanstd{0.76}{0.02} \\
\midrule
\multirow{6}{*}{\electric}
  & \textit{FD-MNAR} & 0  & \meanstd{0.45}{0.11} & \meanstd{0.67}{0.11} & \textbf{\meanstd{0.92}{0.02}} & \meanstd{0.88}{0.12} & \meanstd{0.69}{0.00} & \meanstd{0.76}{0.03} & \meanstd{0.80}{0.02} & \meanstd{0.83}{0.05} & \meanstd{0.79}{0.01} & \meanstd{0.92}{0.01} \\
  & \textit{FD-MNAR} & 0.25 & \meanstd{0.53}{0.10} & \meanstd{0.68}{0.06} & \textbf{\meanstd{0.89}{0.00}} & \meanstd{0.80}{0.02} & \meanstd{0.73}{0.03} & \meanstd{0.69}{0.03} & \meanstd{0.74}{0.02} & \meanstd{0.76}{0.03} & \meanstd{0.73}{0.04} & \meanstd{0.87}{0.01} \\
  & \textit{FD-MNAR} & 0.50 & \meanstd{0.50}{0.10} & \meanstd{0.68}{0.01} & \textbf{\meanstd{0.90}{0.02}} & \meanstd{0.83}{0.01} & \meanstd{0.75}{0.03} & \meanstd{0.62}{0.02} & \meanstd{0.66}{0.03} & \meanstd{0.68}{0.02} & \meanstd{0.66}{0.02} & \meanstd{0.82}{0.02} \\
  & \textit{CD-MNAR} & 0  & \meanstd{0.52}{0.10} & \meanstd{0.78}{0.04} & \meanstd{0.92}{0.02} & \meanstd{0.86}{0.01} & \meanstd{0.88}{0.01} & \meanstd{0.83}{0.05} & \meanstd{0.81}{0.01} & \meanstd{0.81}{0.01} & \meanstd{0.79}{0.02} & \textbf{\meanstd{0.94}{0.00}} \\
  & \textit{CD-MNAR} & 0.25  & \meanstd{0.50}{0.10} & \meanstd{0.78}{0.01} & \textbf{\meanstd{0.88}{0.01}} & \meanstd{0.86}{0.02} & \meanstd{0.85}{0.02} & \meanstd{0.74}{0.04} & \meanstd{0.73}{0.03} & \meanstd{0.72}{0.01} & \meanstd{0.73}{0.02} & \meanstd{0.85}{0.03} \\
  & \textit{CD-MNAR} & 0.50  & \meanstd{0.49}{0.12} & \meanstd{0.70}{0.02} & \textbf{\meanstd{0.87}{0.02}} & \meanstd{0.82}{0.03} & \meanstd{0.81}{0.00} & \meanstd{0.66}{0.01} & \meanstd{0.70}{0.03} & \meanstd{0.65}{0.02} & \meanstd{0.68}{0.02} & \meanstd{0.83}{0.02} \\
\midrule
\multirow{6}{*}{\tadpole}
  & \textit{FD-MNAR} & 0 & \meanstd{0.52}{0.07} & \meanstd{0.53}{0.00} & \meanstd{0.75}{0.03} & \meanstd{0.74}{0.05} & \meanstd{0.79}{0.00} & \meanstd{0.77}{0.00} & \meanstd{0.76}{0.01} & \meanstd{0.79}{0.01} & \meanstd{0.77}{0.02} & \textbf{\meanstd{0.83}{0.02}} \\
  & \textit{FD-MNAR} & 0.25 & \meanstd{0.48}{0.03} & \meanstd{0.48}{0.02} & \meanstd{0.77}{0.01} & \meanstd{0.73}{0.01} & \textbf{\meanstd{0.82}{0.02}} & \meanstd{0.78}{0.03} & \meanstd{0.76}{0.03} & \meanstd{0.78}{0.03} & \meanstd{0.74}{0.03} & \meanstd{0.81}{0.01} \\
  & \textit{FD-MNAR} & 0.50 & \meanstd{0.48}{0.04} & \meanstd{0.53}{0.02} & \meanstd{0.79}{0.02} & \meanstd{0.71}{0.04} & \meanstd{0.78}{0.02} & \meanstd{0.74}{0.02} & \meanstd{0.73}{0.03} & \meanstd{0.74}{0.04} & \meanstd{0.71}{0.02} & \textbf{\meanstd{0.82}{0.03}} \\
  & \textit{CD-MNAR} & 0    & \meanstd{0.60}{0.02} & \meanstd{0.26}{0.02} & \meanstd{0.79}{0.05} & \meanstd{0.75}{0.04} & \textbf{\meanstd{0.80}{0.04}} & \meanstd{0.80}{0.03} & \meanstd{0.79}{0.05} & \meanstd{0.79}{0.04} & \meanstd{0.75}{0.04} & \meanstd{0.79}{0.06} \\
  & \textit{CD-MNAR} & 0.25 & \meanstd{0.47}{0.09} & \meanstd{0.52}{0.02} & \meanstd{0.82}{0.05} & \meanstd{0.78}{0.01} & \textbf{\meanstd{0.80}{0.04}} & \textbf{\meanstd{0.80}{0.04}} & \meanstd{0.77}{0.04} & \meanstd{0.78}{0.04} & \meanstd{0.73}{0.06} & \meanstd{0.75}{0.03} \\
  & \textit{CD-MNAR} & 0.50 & \meanstd{0.49}{0.07} & \meanstd{0.62}{0.05} & \meanstd{0.81}{0.03} & \meanstd{0.75}{0.00} & \meanstd{0.79}{0.01} & \textbf{\meanstd{0.82}{0.02}} & \meanstd{0.76}{0.03} & \meanstd{0.76}{0.05} & \meanstd{0.73}{0.06} & \meanstd{0.74}{0.02} \\
\bottomrule
\end{tabular}}
\end{table}

\subsection{Extended Visualizations}
In addition to Figure~\ref{fig:rq3} in the main paper, Figures~\ref{fig:rq3_all_fd} and~\ref{fig:rq3_all_cd_} report the full results for all models under both training mechanisms.

\begin{figure}[!ht]
    \centering
    \includegraphics[width=1\linewidth]{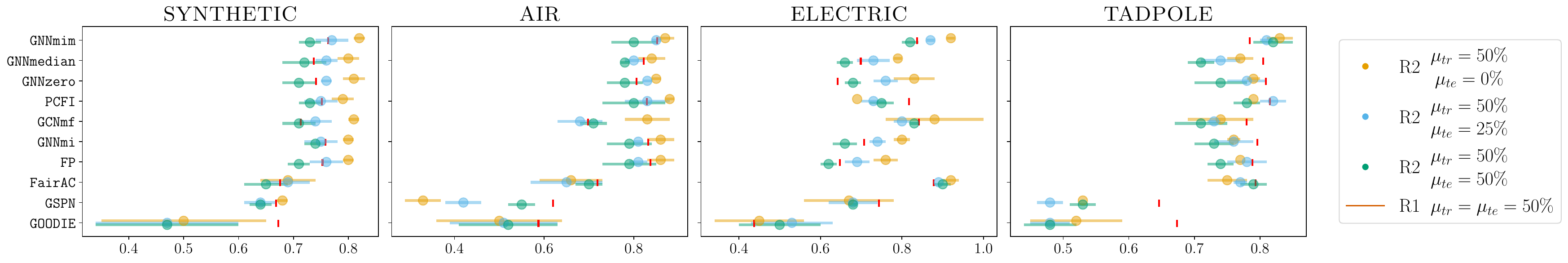}
    \caption{Full results for all models trained with \textit{FD-MNAR} at $\mu_{\text{tr}}=50\%$, tested on \textit{U-MCAR} with $\mu_{\text{te}} \in \{0\%,25\%,50\%\}$. Each panel corresponds to one dataset; each row to one model. Reported values are mean $\pm$ std over 5 runs.}
    \label{fig:rq3_all_fd}
\end{figure}

\begin{figure}[!ht]
    \centering
    \includegraphics[width=1\linewidth]{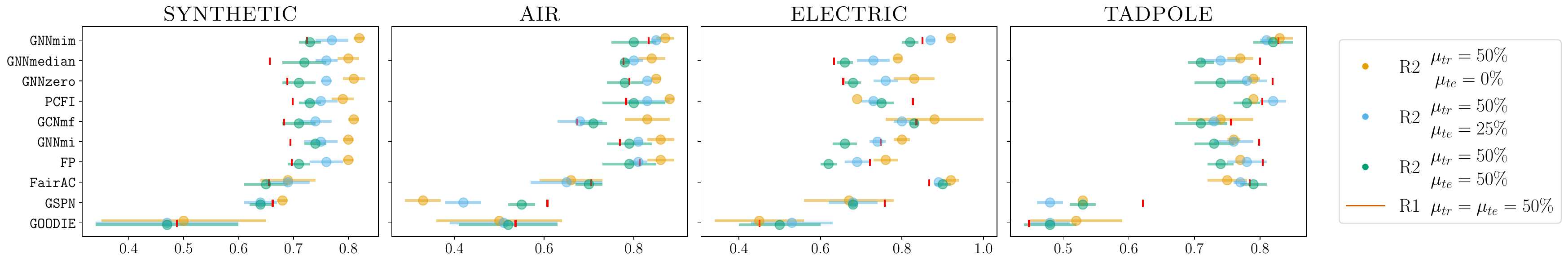}
    \caption{Full results for all models trained with \textit{CD-MNAR} at $\mu_{\text{tr}}=50\%$, tested on \textit{U-MCAR} with $\mu_{\text{te}} \in \{0\%,25\%,50\%\}$. Same layout as Figure~\ref{fig:rq3_all_fd}.}
    \label{fig:rq3_all_cd_}
\end{figure}

\clearpage
\section{Inductive Synthetic Setting}\label{app:inductive}
In addition to the transductive experiments reported in the main paper, we also ran a set of experiments in an inductive setting to demonstrate that our model, \gnnmask, is not restricted to transductive scenarios. As shown in Figure~\ref{fig:rq3_all_cd}, \gnnmask~remains competitive with all other baselines even under this inductive setup.

\begin{figure}[!ht]
    \centering
    \includegraphics[width=1\linewidth]{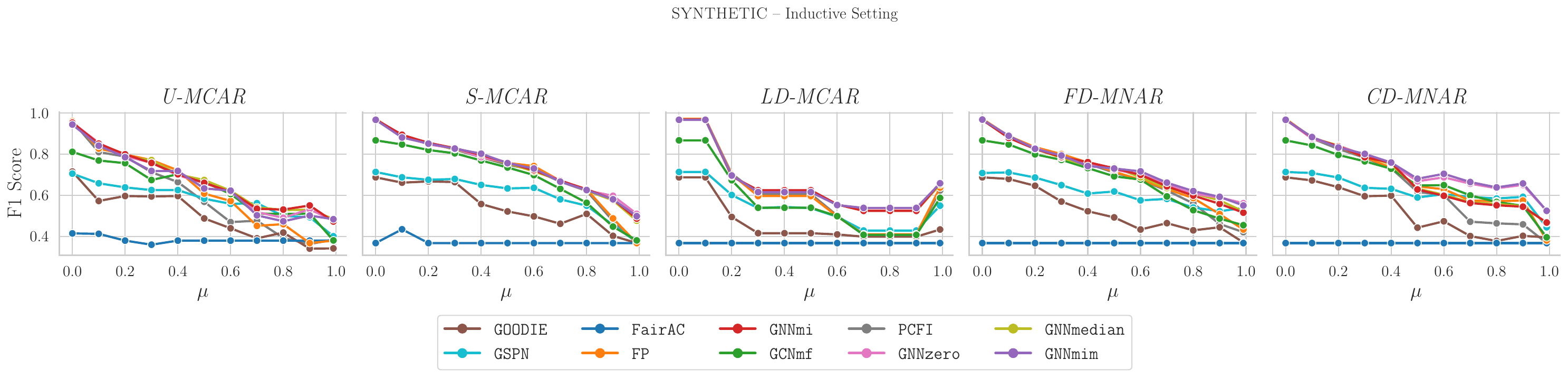}
    \caption{Performance of \gnnmask~and all competitors in an inductive setting. The synthetic dataset is constructed so that test nodes form a separate graph component and are never connected to training nodes, ensuring that no message can propagate between the two sets during training. Despite this strictly inductive setup, \gnnmask~remains competitive with all baselines.} 
    \label{fig:rq3_all_cd}
\end{figure}

\begin{table}[ht]
\centering
\caption{\rebuttal{F1 scores for $\textsc{\inductive}$ under mechanism $\textit{CDMNAR}$ and varying $\mu$}}
\resizebox{\textwidth}{!}{%
\begin{tabular}{lcccccccccc}
\toprule
$\mu$ & \goodie & \gspn & \fairac & \fp & \gnnmi & \gcnmf & \pcfi & \gnnzero & \gnnmedian & \gnnmask \\
\midrule
\rebuttal{0.00} & \rebuttal{\meanstd{0.687}{0.166}} & \rebuttal{\meanstd{0.713}{0.045}} & \rebuttal{\meanstd{0.367}{0.000}} & \rebuttal{\textbf{\meanstd{0.972}{0.011}}} & \rebuttal{\meanstd{0.968}{0.011}} & \rebuttal{\meanstd{0.867}{0.023}} & \rebuttal{\meanstd{0.970}{0.011}} & \rebuttal{\meanstd{0.968}{0.011}} & \rebuttal{\meanstd{0.968}{0.011}} & \rebuttal{\meanstd{0.967}{0.011}} \\
\rebuttal{0.10} & \rebuttal{\meanstd{0.672}{0.167}} & \rebuttal{\meanstd{0.708}{0.022}} & \rebuttal{\meanstd{0.367}{0.000}} & \rebuttal{\meanstd{0.880}{0.014}} & \rebuttal{\meanstd{0.881}{0.014}} & \rebuttal{\meanstd{0.842}{0.010}} & \rebuttal{\meanstd{0.876}{0.011}} & \rebuttal{\meanstd{0.875}{0.018}} & \rebuttal{\meanstd{0.878}{0.020}} & \rebuttal{\textbf{\meanstd{0.883}{0.020}}} \\
\rebuttal{0.20} & \rebuttal{\meanstd{0.639}{0.151}} & \rebuttal{\meanstd{0.686}{0.048}} & \rebuttal{\meanstd{0.367}{0.000}} & \rebuttal{\meanstd{0.836}{0.015}} & \rebuttal{\meanstd{0.838}{0.022}} & \rebuttal{\meanstd{0.796}{0.026}} & \rebuttal{\meanstd{0.825}{0.018}} & \rebuttal{\meanstd{0.840}{0.022}} & \rebuttal{\textbf{\meanstd{0.842}{0.020}}} & \rebuttal{\meanstd{0.832}{0.019}} \\
\rebuttal{0.30} & \rebuttal{\meanstd{0.595}{0.122}} & \rebuttal{\meanstd{0.636}{0.031}} & \rebuttal{\meanstd{0.367}{0.000}} & \rebuttal{\meanstd{0.785}{0.020}} & \rebuttal{\meanstd{0.785}{0.034}} & \rebuttal{\meanstd{0.765}{0.036}} & \rebuttal{\meanstd{0.782}{0.023}} & \rebuttal{\meanstd{0.796}{0.029}} & \rebuttal{\meanstd{0.793}{0.026}} & \rebuttal{\textbf{\meanstd{0.801}{0.020}}} \\
\rebuttal{0.40} & \rebuttal{\meanstd{0.598}{0.119}} & \rebuttal{\meanstd{0.631}{0.043}} & \rebuttal{\meanstd{0.367}{0.000}} & \rebuttal{\meanstd{0.734}{0.019}} & \rebuttal{\meanstd{0.758}{0.024}} & \rebuttal{\meanstd{0.729}{0.021}} & \rebuttal{\meanstd{0.731}{0.008}} & \rebuttal{\meanstd{0.754}{0.017}} & \rebuttal{\meanstd{0.750}{0.023}} & \rebuttal{\textbf{\meanstd{0.759}{0.017}}} \\
\rebuttal{0.50} & \rebuttal{\meanstd{0.442}{0.092}} & \rebuttal{\meanstd{0.589}{0.029}} & \rebuttal{\meanstd{0.367}{0.000}} & \rebuttal{\meanstd{0.643}{0.036}} & \rebuttal{\meanstd{0.628}{0.040}} & \rebuttal{\meanstd{0.647}{0.041}} & \rebuttal{\meanstd{0.616}{0.029}} & \rebuttal{\meanstd{0.668}{0.023}} & \rebuttal{\meanstd{0.632}{0.030}} & \rebuttal{\textbf{\meanstd{0.680}{0.018}}} \\
\rebuttal{0.60} & \rebuttal{\meanstd{0.473}{0.063}} & \rebuttal{\meanstd{0.605}{0.034}} & \rebuttal{\meanstd{0.367}{0.000}} & \rebuttal{\meanstd{0.629}{0.031}} & \rebuttal{\meanstd{0.597}{0.029}} & \rebuttal{\meanstd{0.649}{0.041}} & \rebuttal{\meanstd{0.600}{0.052}} & \rebuttal{\meanstd{0.687}{0.013}} & \rebuttal{\meanstd{0.602}{0.033}} & \rebuttal{\textbf{\meanstd{0.704}{0.021}}} \\
\rebuttal{0.70} & \rebuttal{\meanstd{0.401}{0.070}} & \rebuttal{\meanstd{0.592}{0.024}} & \rebuttal{\meanstd{0.367}{0.000}} & \rebuttal{\meanstd{0.574}{0.016}} & \rebuttal{\meanstd{0.562}{0.007}} & \rebuttal{\meanstd{0.599}{0.064}} & \rebuttal{\meanstd{0.471}{0.041}} & \rebuttal{\meanstd{0.656}{0.023}} & \rebuttal{\meanstd{0.566}{0.018}} & \rebuttal{\textbf{\meanstd{0.664}{0.027}}} \\
\rebuttal{0.80} & \rebuttal{\meanstd{0.377}{0.012}} & \rebuttal{\meanstd{0.584}{0.026}} & \rebuttal{\meanstd{0.367}{0.000}} & \rebuttal{\meanstd{0.571}{0.026}} & \rebuttal{\meanstd{0.551}{0.020}} & \rebuttal{\meanstd{0.567}{0.044}} & \rebuttal{\meanstd{0.463}{0.069}} & \rebuttal{\meanstd{0.634}{0.025}} & \rebuttal{\meanstd{0.557}{0.016}} & \rebuttal{\textbf{\meanstd{0.638}{0.028}}} \\
\rebuttal{0.90} & \rebuttal{\meanstd{0.402}{0.062}} & \rebuttal{\meanstd{0.592}{0.031}} & \rebuttal{\meanstd{0.367}{0.000}} & \rebuttal{\meanstd{0.574}{0.048}} & \rebuttal{\meanstd{0.544}{0.020}} & \rebuttal{\meanstd{0.548}{0.052}} & \rebuttal{\meanstd{0.458}{0.046}} & \rebuttal{\meanstd{0.650}{0.033}} & \rebuttal{\meanstd{0.547}{0.028}} & \rebuttal{\textbf{\meanstd{0.657}{0.020}}} \\
\rebuttal{0.99} & \rebuttal{\meanstd{0.395}{0.052}} & \rebuttal{\meanstd{0.444}{0.060}} & \rebuttal{\meanstd{0.367}{0.000}} & \rebuttal{\meanstd{0.380}{0.022}} & \rebuttal{\meanstd{0.467}{0.020}} & \rebuttal{\meanstd{0.395}{0.035}} & \rebuttal{\meanstd{0.367}{0.000}} & \rebuttal{\textbf{\meanstd{0.524}{0.045}}} & \rebuttal{\meanstd{0.464}{0.013}} & \rebuttal{\textbf{\meanstd{0.524}{0.045}}} \\
\bottomrule
\end{tabular}
}
\end{table}

\begin{table}[ht]
\centering
\caption{\rebuttal{F1 scores for $\textsc{\inductive}$ under mechanism $\textit{FDMNAR}$ and varying $\mu$}}
\resizebox{\textwidth}{!}{%
\begin{tabular}{lcccccccccc}
\toprule
$\mu$ & \goodie & \gspn & \fairac & \fp & \gnnmi & \gcnmf & \pcfi & \gnnzero & \gnnmedian & \gnnmask \\
\midrule
\rebuttal{0.00} & \rebuttal{\meanstd{0.687}{0.166}} & \rebuttal{\meanstd{0.708}{0.045}} & \rebuttal{\meanstd{0.367}{0.000}} & \rebuttal{\textbf{\meanstd{0.972}{0.011}}} & \rebuttal{\meanstd{0.967}{0.011}} & \rebuttal{\meanstd{0.867}{0.022}} & \rebuttal{\meanstd{0.968}{0.013}} & \rebuttal{\meanstd{0.967}{0.011}} & \rebuttal{\meanstd{0.967}{0.011}} & \rebuttal{\meanstd{0.968}{0.011}} \\
\rebuttal{0.10} & \rebuttal{\meanstd{0.679}{0.166}} & \rebuttal{\meanstd{0.711}{0.012}} & \rebuttal{\meanstd{0.367}{0.000}} & \rebuttal{\meanstd{0.888}{0.013}} & \rebuttal{\meanstd{0.879}{0.024}} & \rebuttal{\meanstd{0.847}{0.013}} & \rebuttal{\meanstd{0.885}{0.014}} & \rebuttal{\meanstd{0.882}{0.022}} & \rebuttal{\meanstd{0.886}{0.020}} & \rebuttal{\textbf{\meanstd{0.889}{0.017}}} \\
\rebuttal{0.20} & \rebuttal{\meanstd{0.646}{0.154}} & \rebuttal{\meanstd{0.686}{0.033}} & \rebuttal{\meanstd{0.367}{0.000}} & \rebuttal{\textbf{\meanstd{0.834}{0.024}}} & \rebuttal{\meanstd{0.825}{0.024}} & \rebuttal{\meanstd{0.799}{0.016}} & \rebuttal{\meanstd{0.832}{0.026}} & \rebuttal{\meanstd{0.830}{0.022}} & \rebuttal{\meanstd{0.825}{0.025}} & \rebuttal{\meanstd{0.826}{0.028}} \\
\rebuttal{0.30} & \rebuttal{\meanstd{0.569}{0.133}} & \rebuttal{\meanstd{0.649}{0.013}} & \rebuttal{\meanstd{0.367}{0.000}} & \rebuttal{\textbf{\meanstd{0.800}{0.042}}} & \rebuttal{\meanstd{0.786}{0.034}} & \rebuttal{\meanstd{0.772}{0.028}} & \rebuttal{\meanstd{0.796}{0.025}} & \rebuttal{\meanstd{0.789}{0.036}} & \rebuttal{\meanstd{0.782}{0.032}} & \rebuttal{\meanstd{0.793}{0.036}} \\
\rebuttal{0.40} & \rebuttal{\meanstd{0.522}{0.134}} & \rebuttal{\meanstd{0.608}{0.037}} & \rebuttal{\meanstd{0.367}{0.000}} & \rebuttal{\meanstd{0.759}{0.021}} & \rebuttal{\textbf{\meanstd{0.761}{0.027}}} & \rebuttal{\meanstd{0.732}{0.032}} & \rebuttal{\meanstd{0.753}{0.026}} & \rebuttal{\meanstd{0.757}{0.032}} & \rebuttal{\meanstd{0.743}{0.028}} & \rebuttal{\meanstd{0.742}{0.032}} \\
\rebuttal{0.50} & \rebuttal{\meanstd{0.492}{0.135}} & \rebuttal{\meanstd{0.618}{0.008}} & \rebuttal{\meanstd{0.367}{0.000}} & \rebuttal{\meanstd{0.714}{0.016}} & \rebuttal{\meanstd{0.731}{0.015}} & \rebuttal{\meanstd{0.692}{0.027}} & \rebuttal{\meanstd{0.710}{0.028}} & \rebuttal{\meanstd{0.724}{0.017}} & \rebuttal{\textbf{\meanstd{0.736}{0.018}}} & \rebuttal{\meanstd{0.730}{0.019}} \\
\rebuttal{0.60} & \rebuttal{\meanstd{0.433}{0.084}} & \rebuttal{\meanstd{0.575}{0.025}} & \rebuttal{\meanstd{0.367}{0.000}} & \rebuttal{\meanstd{0.675}{0.031}} & \rebuttal{\meanstd{0.699}{0.032}} & \rebuttal{\meanstd{0.676}{0.022}} & \rebuttal{\meanstd{0.674}{0.039}} & \rebuttal{\meanstd{0.702}{0.030}} & \rebuttal{\meanstd{0.687}{0.027}} & \rebuttal{\textbf{\meanstd{0.716}{0.031}}} \\
\rebuttal{0.70} & \rebuttal{\meanstd{0.464}{0.090}} & \rebuttal{\meanstd{0.582}{0.020}} & \rebuttal{\meanstd{0.367}{0.000}} & \rebuttal{\meanstd{0.630}{0.031}} & \rebuttal{\meanstd{0.643}{0.037}} & \rebuttal{\meanstd{0.594}{0.040}} & \rebuttal{\meanstd{0.623}{0.035}} & \rebuttal{\meanstd{0.651}{0.037}} & \rebuttal{\meanstd{0.635}{0.033}} & \rebuttal{\textbf{\meanstd{0.661}{0.019}}} \\
\rebuttal{0.80} & \rebuttal{\meanstd{0.429}{0.065}} & \rebuttal{\meanstd{0.540}{0.009}} & \rebuttal{\meanstd{0.367}{0.000}} & \rebuttal{\meanstd{0.586}{0.021}} & \rebuttal{\meanstd{0.598}{0.027}} & \rebuttal{\meanstd{0.527}{0.053}} & \rebuttal{\meanstd{0.560}{0.030}} & \rebuttal{\meanstd{0.607}{0.029}} & \rebuttal{\meanstd{0.609}{0.019}} & \rebuttal{\textbf{\meanstd{0.620}{0.024}}} \\
\rebuttal{0.90} & \rebuttal{\meanstd{0.444}{0.082}} & \rebuttal{\meanstd{0.522}{0.034}} & \rebuttal{\meanstd{0.367}{0.000}} & \rebuttal{\meanstd{0.508}{0.105}} & \rebuttal{\meanstd{0.558}{0.049}} & \rebuttal{\meanstd{0.486}{0.061}} & \rebuttal{\meanstd{0.460}{0.129}} & \rebuttal{\meanstd{0.589}{0.042}} & \rebuttal{\meanstd{0.575}{0.044}} & \rebuttal{\textbf{\meanstd{0.592}{0.023}}} \\
\rebuttal{0.99} & \rebuttal{\meanstd{0.370}{0.005}} & \rebuttal{\meanstd{0.538}{0.041}} & \rebuttal{\meanstd{0.367}{0.000}} & \rebuttal{\meanstd{0.433}{0.093}} & \rebuttal{\meanstd{0.515}{0.035}} & \rebuttal{\meanstd{0.454}{0.076}} & \rebuttal{\meanstd{0.420}{0.105}} & \rebuttal{\textbf{\meanstd{0.561}{0.036}}} & \rebuttal{\meanstd{0.521}{0.040}} & \rebuttal{\meanstd{0.550}{0.040}} \\
\bottomrule
\end{tabular}
}
\end{table}

\begin{table}[ht]
\centering
\caption{\rebuttal{F1 scores for $\textsc{\inductive}$ under mechanism $\textit{LDMCAR}$ and varying $\mu$}}
\resizebox{\textwidth}{!}{%
\begin{tabular}{lcccccccccc}
\toprule
$\mu$ & \goodie & \gspn & \fairac & \fp & \gnnmi & \gcnmf & \pcfi & \gnnzero & \gnnmedian & \gnnmask \\
\midrule
\rebuttal{0.00} & \rebuttal{\meanstd{0.687}{0.166}} & \rebuttal{\meanstd{0.713}{0.045}} & \rebuttal{\meanstd{0.367}{0.000}} & \rebuttal{\textbf{\meanstd{0.972}{0.011}}} & \rebuttal{\meanstd{0.968}{0.011}} & \rebuttal{\meanstd{0.867}{0.023}} & \rebuttal{\meanstd{0.970}{0.011}} & \rebuttal{\meanstd{0.968}{0.011}} & \rebuttal{\meanstd{0.968}{0.011}} & \rebuttal{\meanstd{0.967}{0.011}} \\
\rebuttal{0.10} & \rebuttal{\meanstd{0.687}{0.166}} & \rebuttal{\meanstd{0.713}{0.045}} & \rebuttal{\meanstd{0.367}{0.000}} & \rebuttal{\textbf{\meanstd{0.972}{0.011}}} & \rebuttal{\meanstd{0.968}{0.011}} & \rebuttal{\meanstd{0.867}{0.023}} & \rebuttal{\meanstd{0.970}{0.011}} & \rebuttal{\meanstd{0.968}{0.011}} & \rebuttal{\meanstd{0.968}{0.011}} & \rebuttal{\meanstd{0.967}{0.011}} \\
\rebuttal{0.20} & \rebuttal{\meanstd{0.494}{0.117}} & \rebuttal{\meanstd{0.601}{0.039}} & \rebuttal{\meanstd{0.367}{0.000}} & \rebuttal{\meanstd{0.701}{0.023}} & \rebuttal{\meanstd{0.692}{0.031}} & \rebuttal{\meanstd{0.673}{0.036}} & \rebuttal{\textbf{\meanstd{0.705}{0.019}}} & \rebuttal{\meanstd{0.692}{0.031}} & \rebuttal{\meanstd{0.692}{0.031}} & \rebuttal{\meanstd{0.696}{0.029}} \\
\rebuttal{0.30} & \rebuttal{\meanstd{0.415}{0.076}} & \rebuttal{\meanstd{0.537}{0.032}} & \rebuttal{\meanstd{0.367}{0.000}} & \rebuttal{\meanstd{0.596}{0.010}} & \rebuttal{\textbf{\meanstd{0.624}{0.010}}} & \rebuttal{\meanstd{0.539}{0.028}} & \rebuttal{\meanstd{0.606}{0.006}} & \rebuttal{\textbf{\meanstd{0.624}{0.010}}} & \rebuttal{\textbf{\meanstd{0.624}{0.010}}} & \rebuttal{\meanstd{0.615}{0.011}} \\
\rebuttal{0.40} & \rebuttal{\meanstd{0.415}{0.076}} & \rebuttal{\meanstd{0.543}{0.037}} & \rebuttal{\meanstd{0.367}{0.000}} & \rebuttal{\meanstd{0.596}{0.010}} & \rebuttal{\textbf{\meanstd{0.624}{0.010}}} & \rebuttal{\meanstd{0.539}{0.028}} & \rebuttal{\meanstd{0.606}{0.006}} & \rebuttal{\textbf{\meanstd{0.624}{0.010}}} & \rebuttal{\textbf{\meanstd{0.624}{0.010}}} & \rebuttal{\meanstd{0.615}{0.011}} \\
\rebuttal{0.50} & \rebuttal{\meanstd{0.415}{0.076}} & \rebuttal{\meanstd{0.537}{0.032}} & \rebuttal{\meanstd{0.367}{0.000}} & \rebuttal{\meanstd{0.596}{0.010}} & \rebuttal{\textbf{\meanstd{0.624}{0.010}}} & \rebuttal{\meanstd{0.539}{0.028}} & \rebuttal{\meanstd{0.606}{0.006}} & \rebuttal{\textbf{\meanstd{0.624}{0.010}}} & \rebuttal{\textbf{\meanstd{0.624}{0.010}}} & \rebuttal{\meanstd{0.615}{0.011}} \\
\rebuttal{0.60} & \rebuttal{\meanstd{0.409}{0.053}} & \rebuttal{\meanstd{0.495}{0.044}} & \rebuttal{\meanstd{0.367}{0.000}} & \rebuttal{\meanstd{0.497}{0.015}} & \rebuttal{\textbf{\meanstd{0.555}{0.019}}} & \rebuttal{\meanstd{0.498}{0.022}} & \rebuttal{\meanstd{0.501}{0.022}} & \rebuttal{\textbf{\meanstd{0.555}{0.019}}} & \rebuttal{\textbf{\meanstd{0.555}{0.019}}} & \rebuttal{\meanstd{0.552}{0.027}} \\
\rebuttal{0.70} & \rebuttal{\meanstd{0.398}{0.037}} & \rebuttal{\meanstd{0.428}{0.030}} & \rebuttal{\meanstd{0.367}{0.000}} & \rebuttal{\meanstd{0.410}{0.027}} & \rebuttal{\meanstd{0.524}{0.044}} & \rebuttal{\meanstd{0.407}{0.051}} & \rebuttal{\meanstd{0.407}{0.025}} & \rebuttal{\meanstd{0.524}{0.044}} & \rebuttal{\meanstd{0.524}{0.044}} & \rebuttal{\textbf{\meanstd{0.538}{0.023}}} \\
\rebuttal{0.80} & \rebuttal{\meanstd{0.398}{0.037}} & \rebuttal{\meanstd{0.428}{0.030}} & \rebuttal{\meanstd{0.367}{0.000}} & \rebuttal{\meanstd{0.410}{0.027}} & \rebuttal{\meanstd{0.524}{0.044}} & \rebuttal{\meanstd{0.407}{0.051}} & \rebuttal{\meanstd{0.407}{0.025}} & \rebuttal{\meanstd{0.524}{0.044}} & \rebuttal{\meanstd{0.524}{0.044}} & \rebuttal{\textbf{\meanstd{0.538}{0.023}}} \\
\rebuttal{0.90} & \rebuttal{\meanstd{0.398}{0.037}} & \rebuttal{\meanstd{0.428}{0.030}} & \rebuttal{\meanstd{0.367}{0.000}} & \rebuttal{\meanstd{0.410}{0.027}} & \rebuttal{\meanstd{0.524}{0.044}} & \rebuttal{\meanstd{0.407}{0.051}} & \rebuttal{\meanstd{0.407}{0.025}} & \rebuttal{\meanstd{0.524}{0.044}} & \rebuttal{\meanstd{0.524}{0.044}} & \rebuttal{\textbf{\meanstd{0.538}{0.023}}} \\
\rebuttal{0.99} & \rebuttal{\meanstd{0.433}{0.069}} & \rebuttal{\meanstd{0.549}{0.024}} & \rebuttal{\meanstd{0.367}{0.000}} & \rebuttal{\meanstd{0.637}{0.036}} & \rebuttal{\meanstd{0.659}{0.029}} & \rebuttal{\meanstd{0.587}{0.031}} & \rebuttal{\meanstd{0.623}{0.027}} & \rebuttal{\textbf{\meanstd{0.660}{0.025}}} & \rebuttal{\meanstd{0.652}{0.025}} & \rebuttal{\meanstd{0.658}{0.023}} \\
\bottomrule
\end{tabular}
}
\end{table}

\begin{table}[ht]
\centering
\caption{\rebuttal{F1 scores for $\textsc{\inductive}$ under mechanism $\textit{SMCAR}$ and varying $\mu$}}
\resizebox{\textwidth}{!}{%
\begin{tabular}{lcccccccccc}
\toprule
$\mu$ & \goodie & \gspn & \fairac & \fp & \gnnmi & \gcnmf & \pcfi & \gnnzero & \gnnmedian & \gnnmask \\
\midrule
\rebuttal{0.00} & \rebuttal{\meanstd{0.687}{0.166}} & \rebuttal{\meanstd{0.713}{0.045}} & \rebuttal{\meanstd{0.367}{0.000}} & \rebuttal{\textbf{\meanstd{0.972}{0.011}}} & \rebuttal{\meanstd{0.968}{0.011}} & \rebuttal{\meanstd{0.867}{0.023}} & \rebuttal{\meanstd{0.970}{0.011}} & \rebuttal{\meanstd{0.968}{0.011}} & \rebuttal{\meanstd{0.968}{0.011}} & \rebuttal{\meanstd{0.967}{0.011}} \\
\rebuttal{0.10} & \rebuttal{\meanstd{0.661}{0.148}} & \rebuttal{\meanstd{0.687}{0.013}} & \rebuttal{\meanstd{0.434}{0.133}} & \rebuttal{\meanstd{0.887}{0.012}} & \rebuttal{\textbf{\meanstd{0.894}{0.016}}} & \rebuttal{\meanstd{0.847}{0.025}} & \rebuttal{\meanstd{0.891}{0.018}} & \rebuttal{\textbf{\meanstd{0.894}{0.017}}} & \rebuttal{\meanstd{0.890}{0.021}} & \rebuttal{\meanstd{0.881}{0.018}} \\
\rebuttal{0.20} & \rebuttal{\meanstd{0.667}{0.157}} & \rebuttal{\meanstd{0.675}{0.036}} & \rebuttal{\meanstd{0.367}{0.000}} & \rebuttal{\meanstd{0.850}{0.017}} & \rebuttal{\meanstd{0.855}{0.027}} & \rebuttal{\meanstd{0.820}{0.030}} & \rebuttal{\textbf{\meanstd{0.856}{0.025}}} & \rebuttal{\meanstd{0.847}{0.018}} & \rebuttal{\meanstd{0.851}{0.027}} & \rebuttal{\meanstd{0.851}{0.028}} \\
\rebuttal{0.30} & \rebuttal{\meanstd{0.664}{0.155}} & \rebuttal{\meanstd{0.679}{0.034}} & \rebuttal{\meanstd{0.367}{0.000}} & \rebuttal{\textbf{\meanstd{0.830}{0.016}}} & \rebuttal{\meanstd{0.829}{0.032}} & \rebuttal{\meanstd{0.804}{0.028}} & \rebuttal{\meanstd{0.822}{0.025}} & \rebuttal{\meanstd{0.828}{0.032}} & \rebuttal{\meanstd{0.824}{0.034}} & \rebuttal{\meanstd{0.827}{0.038}} \\
\rebuttal{0.40} & \rebuttal{\meanstd{0.557}{0.152}} & \rebuttal{\meanstd{0.650}{0.029}} & \rebuttal{\meanstd{0.367}{0.000}} & \rebuttal{\meanstd{0.785}{0.030}} & \rebuttal{\meanstd{0.796}{0.035}} & \rebuttal{\meanstd{0.769}{0.018}} & \rebuttal{\meanstd{0.785}{0.029}} & \rebuttal{\meanstd{0.785}{0.043}} & \rebuttal{\meanstd{0.790}{0.039}} & \rebuttal{\textbf{\meanstd{0.802}{0.032}}} \\
\rebuttal{0.50} & \rebuttal{\meanstd{0.521}{0.152}} & \rebuttal{\meanstd{0.633}{0.045}} & \rebuttal{\meanstd{0.367}{0.000}} & \rebuttal{\meanstd{0.757}{0.029}} & \rebuttal{\meanstd{0.758}{0.018}} & \rebuttal{\meanstd{0.735}{0.019}} & \rebuttal{\meanstd{0.748}{0.030}} & \rebuttal{\textbf{\meanstd{0.760}{0.018}}} & \rebuttal{\meanstd{0.755}{0.019}} & \rebuttal{\meanstd{0.756}{0.009}} \\
\rebuttal{0.60} & \rebuttal{\meanstd{0.497}{0.135}} & \rebuttal{\meanstd{0.636}{0.058}} & \rebuttal{\meanstd{0.367}{0.000}} & \rebuttal{\textbf{\meanstd{0.742}{0.030}}} & \rebuttal{\meanstd{0.722}{0.034}} & \rebuttal{\meanstd{0.698}{0.021}} & \rebuttal{\meanstd{0.723}{0.038}} & \rebuttal{\meanstd{0.724}{0.039}} & \rebuttal{\meanstd{0.716}{0.031}} & \rebuttal{\meanstd{0.730}{0.027}} \\
\rebuttal{0.70} & \rebuttal{\meanstd{0.461}{0.125}} & \rebuttal{\meanstd{0.580}{0.062}} & \rebuttal{\meanstd{0.367}{0.000}} & \rebuttal{\meanstd{0.670}{0.018}} & \rebuttal{\meanstd{0.671}{0.029}} & \rebuttal{\meanstd{0.631}{0.036}} & \rebuttal{\meanstd{0.666}{0.038}} & \rebuttal{\textbf{\meanstd{0.673}{0.030}}} & \rebuttal{\meanstd{0.672}{0.028}} & \rebuttal{\meanstd{0.666}{0.035}} \\
\rebuttal{0.80} & \rebuttal{\meanstd{0.509}{0.121}} & \rebuttal{\meanstd{0.549}{0.071}} & \rebuttal{\meanstd{0.367}{0.000}} & \rebuttal{\meanstd{0.628}{0.053}} & \rebuttal{\textbf{\meanstd{0.629}{0.025}}} & \rebuttal{\meanstd{0.563}{0.070}} & \rebuttal{\meanstd{0.621}{0.044}} & \rebuttal{\meanstd{0.623}{0.013}} & \rebuttal{\meanstd{0.622}{0.025}} & \rebuttal{\meanstd{0.625}{0.037}} \\
\rebuttal{0.90} & \rebuttal{\meanstd{0.402}{0.071}} & \rebuttal{\meanstd{0.455}{0.068}} & \rebuttal{\meanstd{0.367}{0.000}} & \rebuttal{\meanstd{0.487}{0.070}} & \rebuttal{\meanstd{0.580}{0.043}} & \rebuttal{\meanstd{0.447}{0.060}} & \rebuttal{\meanstd{0.474}{0.092}} & \rebuttal{\textbf{\meanstd{0.597}{0.026}}} & \rebuttal{\meanstd{0.575}{0.039}} & \rebuttal{\meanstd{0.580}{0.027}} \\
\rebuttal{0.99} & \rebuttal{\meanstd{0.367}{0.000}} & \rebuttal{\meanstd{0.372}{0.010}} & \rebuttal{\meanstd{0.367}{0.000}} & \rebuttal{\meanstd{0.367}{0.000}} & \rebuttal{\meanstd{0.486}{0.027}} & \rebuttal{\meanstd{0.380}{0.019}} & \rebuttal{\meanstd{0.367}{0.000}} & \rebuttal{\textbf{\meanstd{0.509}{0.038}}} & \rebuttal{\meanstd{0.476}{0.024}} & \rebuttal{\meanstd{0.498}{0.031}} \\
\bottomrule
\end{tabular}
}
\end{table}

\begin{table}[ht]
\centering
\caption{\rebuttal{F1 scores for $\textsc{\inductive}$ under mechanism $\textit{UMCAR}$ and varying $\mu$}}
\resizebox{\textwidth}{!}{%
\begin{tabular}{lcccccccccc}
\toprule
$\mu$ & \goodie & \gspn & \fairac & \fp & \gnnmi & \gcnmf & \pcfi & \gnnzero & \gnnmedian & \gnnmask \\
\midrule
\rebuttal{0.00} & \rebuttal{\meanstd{0.715}{0.096}} & \rebuttal{\meanstd{0.705}{0.033}} & \rebuttal{\meanstd{0.414}{0.055}} & \rebuttal{\textbf{\meanstd{0.960}{0.009}}} & \rebuttal{\meanstd{0.953}{0.006}} & \rebuttal{\meanstd{0.811}{0.030}} & \rebuttal{\textbf{\meanstd{0.960}{0.009}}} & \rebuttal{\meanstd{0.953}{0.006}} & \rebuttal{\meanstd{0.953}{0.006}} & \rebuttal{\meanstd{0.944}{0.017}} \\
\rebuttal{0.10} & \rebuttal{\meanstd{0.572}{0.137}} & \rebuttal{\meanstd{0.658}{0.031}} & \rebuttal{\meanstd{0.412}{0.057}} & \rebuttal{\meanstd{0.827}{0.050}} & \rebuttal{\meanstd{0.851}{0.043}} & \rebuttal{\meanstd{0.769}{0.112}} & \rebuttal{\meanstd{0.810}{0.034}} & \rebuttal{\textbf{\meanstd{0.855}{0.044}}} & \rebuttal{\meanstd{0.846}{0.047}} & \rebuttal{\meanstd{0.841}{0.051}} \\
\rebuttal{0.20} & \rebuttal{\meanstd{0.596}{0.165}} & \rebuttal{\meanstd{0.638}{0.025}} & \rebuttal{\meanstd{0.379}{0.000}} & \rebuttal{\meanstd{0.798}{0.033}} & \rebuttal{\textbf{\meanstd{0.799}{0.020}}} & \rebuttal{\meanstd{0.756}{0.032}} & \rebuttal{\meanstd{0.788}{0.027}} & \rebuttal{\meanstd{0.790}{0.028}} & \rebuttal{\meanstd{0.788}{0.021}} & \rebuttal{\meanstd{0.785}{0.021}} \\
\rebuttal{0.30} & \rebuttal{\meanstd{0.594}{0.145}} & \rebuttal{\meanstd{0.625}{0.014}} & \rebuttal{\meanstd{0.359}{0.040}} & \rebuttal{\textbf{\meanstd{0.771}{0.037}}} & \rebuttal{\meanstd{0.757}{0.046}} & \rebuttal{\meanstd{0.674}{0.133}} & \rebuttal{\meanstd{0.712}{0.045}} & \rebuttal{\meanstd{0.758}{0.049}} & \rebuttal{\textbf{\meanstd{0.771}{0.042}}} & \rebuttal{\meanstd{0.718}{0.047}} \\
\rebuttal{0.40} & \rebuttal{\meanstd{0.596}{0.132}} & \rebuttal{\meanstd{0.625}{0.005}} & \rebuttal{\meanstd{0.379}{0.000}} & \rebuttal{\textbf{\meanstd{0.721}{0.055}}} & \rebuttal{\meanstd{0.702}{0.044}} & \rebuttal{\meanstd{0.702}{0.055}} & \rebuttal{\meanstd{0.664}{0.080}} & \rebuttal{\meanstd{0.697}{0.049}} & \rebuttal{\meanstd{0.701}{0.048}} & \rebuttal{\meanstd{0.718}{0.029}} \\
\rebuttal{0.50} & \rebuttal{\meanstd{0.487}{0.113}} & \rebuttal{\meanstd{0.583}{0.040}} & \rebuttal{\meanstd{0.379}{0.000}} & \rebuttal{\meanstd{0.608}{0.067}} & \rebuttal{\meanstd{0.660}{0.027}} & \rebuttal{\meanstd{0.664}{0.053}} & \rebuttal{\meanstd{0.568}{0.074}} & \rebuttal{\meanstd{0.659}{0.021}} & \rebuttal{\textbf{\meanstd{0.674}{0.022}}} & \rebuttal{\meanstd{0.633}{0.035}} \\
\rebuttal{0.60} & \rebuttal{\meanstd{0.439}{0.118}} & \rebuttal{\meanstd{0.558}{0.034}} & \rebuttal{\meanstd{0.379}{0.000}} & \rebuttal{\meanstd{0.572}{0.077}} & \rebuttal{\meanstd{0.617}{0.038}} & \rebuttal{\meanstd{0.606}{0.081}} & \rebuttal{\meanstd{0.469}{0.102}} & \rebuttal{\meanstd{0.617}{0.038}} & \rebuttal{\textbf{\meanstd{0.622}{0.039}}} & \rebuttal{\textbf{\meanstd{0.622}{0.062}}} \\
\rebuttal{0.70} & \rebuttal{\meanstd{0.390}{0.074}} & \rebuttal{\textbf{\meanstd{0.561}{0.019}}} & \rebuttal{\meanstd{0.379}{0.000}} & \rebuttal{\meanstd{0.451}{0.092}} & \rebuttal{\meanstd{0.534}{0.073}} & \rebuttal{\meanstd{0.511}{0.095}} & \rebuttal{\meanstd{0.476}{0.118}} & \rebuttal{\meanstd{0.518}{0.076}} & \rebuttal{\meanstd{0.541}{0.089}} & \rebuttal{\meanstd{0.502}{0.092}} \\
\rebuttal{0.80} & \rebuttal{\meanstd{0.418}{0.123}} & \rebuttal{\meanstd{0.499}{0.029}} & \rebuttal{\meanstd{0.379}{0.000}} & \rebuttal{\meanstd{0.459}{0.074}} & \rebuttal{\textbf{\meanstd{0.530}{0.060}}} & \rebuttal{\meanstd{0.508}{0.088}} & \rebuttal{\meanstd{0.392}{0.087}} & \rebuttal{\meanstd{0.490}{0.059}} & \rebuttal{\meanstd{0.528}{0.044}} & \rebuttal{\meanstd{0.473}{0.052}} \\
\rebuttal{0.90} & \rebuttal{\meanstd{0.340}{0.048}} & \rebuttal{\meanstd{0.493}{0.022}} & \rebuttal{\meanstd{0.379}{0.000}} & \rebuttal{\meanstd{0.367}{0.046}} & \rebuttal{\textbf{\meanstd{0.550}{0.139}}} & \rebuttal{\meanstd{0.511}{0.082}} & \rebuttal{\meanstd{0.362}{0.041}} & \rebuttal{\meanstd{0.532}{0.134}} & \rebuttal{\meanstd{0.529}{0.131}} & \rebuttal{\meanstd{0.501}{0.122}} \\
\rebuttal{0.99} & \rebuttal{\meanstd{0.341}{0.045}} & \rebuttal{\meanstd{0.400}{0.025}} & \rebuttal{\meanstd{0.379}{0.000}} & \rebuttal{\meanstd{0.379}{0.000}} & \rebuttal{\meanstd{0.472}{0.022}} & \rebuttal{\meanstd{0.380}{0.003}} & \rebuttal{\meanstd{0.384}{0.011}} & \rebuttal{\meanstd{0.476}{0.038}} & \rebuttal{\textbf{\meanstd{0.485}{0.018}}} & \rebuttal{\meanstd{0.483}{0.033}} \\
\bottomrule
\end{tabular}
}
\end{table}

\clearpage
\section{Gain using MIM with competitors}\label{app:mimon}

\rebuttal{Tables~\ref{tab:mimgain1} through \ref{tab:mimgain4} report the performance gain observed when all competitor models described in the main paper are equipped with the MIM mask, mirroring the setup used for \gnnmask.
Consistently, basic imputation methods that replace missing features with a constant, such as \gnnmi and \gnnmedian, show a positive and comparable performance increase when supplied with the same mask. This suggests that the improvement comes from the model's ability to selectively ignore  the filled or imputed feature values indicated by the mask.}

\begin{table}[ht]
\centering
\caption{\rebuttal{F1 gain from using mask on $\textsc{SYNTHETIC}$ under mechanism $\textit{U\text{-}MCAR}$}}
\label{tab:mimgain1}
\footnotesize
        \setlength{\tabcolsep}{3pt}
\begin{tabular}{lccccccccc}
\toprule
$\mu$ & \fairac & \fp & \gcnmf & \gnnmedian & \gnnmi & \goodie & \gspn & \pcfi & \gnnzero\\
\midrule
0.00 & \rebuttal{-0.087} & \rebuttal{-0.016} & \rebuttal{-0.145} & \rebuttal{0.002} & \rebuttal{0.003} & \rebuttal{-0.256} & \rebuttal{-0.094} & \rebuttal{-0.020} & \rebuttal{0.005} \\
0.10 & \rebuttal{-0.094} & \rebuttal{-0.022} & \rebuttal{-0.065} & \rebuttal{0.006} & \rebuttal{0.005} & \rebuttal{-0.253} & \rebuttal{-0.080} & \rebuttal{-0.004}  & \rebuttal{0.001}\\
0.20 & \rebuttal{-0.102} & \rebuttal{-0.013} & \rebuttal{-0.005} & \rebuttal{0.002} & \rebuttal{0.004} & \rebuttal{-0.215} & \rebuttal{-0.052} & \rebuttal{-0.001} & \rebuttal{0.008}\\
0.30 & \rebuttal{-0.078} & \rebuttal{0.002} & \rebuttal{-0.021} & \rebuttal{0.012} & \rebuttal{0.014} & \rebuttal{-0.198} & \rebuttal{-0.068} & \rebuttal{-0.008} & \rebuttal{0.015}\\
0.40 & \rebuttal{-0.082} & \rebuttal{0.008} & \rebuttal{-0.022} & \rebuttal{0.012} & \rebuttal{0.07} & \rebuttal{-0.223} & \rebuttal{-0.075} & \rebuttal{0.006} & \rebuttal{0.025}\\
0.50 & \rebuttal{0.011} & \rebuttal{-0.006} & \rebuttal{-0.010} & \rebuttal{0.005} & \rebuttal{0.09} & \rebuttal{-0.268} & \rebuttal{-0.079} & \rebuttal{-0.018} & \rebuttal{0.007}\\
0.60 & \rebuttal{-0.025} & \rebuttal{-0.004} & \rebuttal{-0.029} & \rebuttal{0.004} & \rebuttal{0.013} & \rebuttal{-0.346} & \rebuttal{-0.072} & \rebuttal{-0.001} & \rebuttal{0.000}\\
0.70 & \rebuttal{0.013} & \rebuttal{0.001} & \rebuttal{-0.044} & \rebuttal{0.005} & \rebuttal{0.004} & \rebuttal{-0.321} & \rebuttal{-0.008} & \rebuttal{0.006} & \rebuttal{0.006}\\
0.80 & \rebuttal{-0.070} & \rebuttal{-0.008} & \rebuttal{0.009} & \rebuttal{0.002} & \rebuttal{0.015} & \rebuttal{-0.429} & \rebuttal{0.015} & \rebuttal{-0.014} & \rebuttal{0.039}\\
0.90 & \rebuttal{-0.020} & \rebuttal{-0.017} & \rebuttal{-0.011} & \rebuttal{0.011} & \rebuttal{0.014} & \rebuttal{-0.346} & \rebuttal{0.053} & \rebuttal{0.001} & \rebuttal{0.001}\\
0.99 & \rebuttal{0.052} & \rebuttal{-0.007} & \rebuttal{0.056} & \rebuttal{-0.020} & \rebuttal{-0.013} & \rebuttal{-0.422} & \rebuttal{0.024} & \rebuttal{-0.011} & \rebuttal{-0.013}\\
\bottomrule
\end{tabular}

\end{table}

\begin{table}[ht]
\centering
\caption{\rebuttal{F1 gain from using mask on $\textsc{SYNTHETIC}$ under mechanism $\textit{S\text{-}MCAR}$}}
\label{tab:mimgain2}
\footnotesize
        \setlength{\tabcolsep}{3pt}
\begin{tabular}{lccccccccc}
\toprule
$\mu$ & \fairac & \fp & \gcnmf & \gnnmedian & \gnnmi & \goodie & \gspn & \pcfi & \gnnzero\\
\midrule
0.00 & \rebuttal{-0.080} & \rebuttal{-0.016} & \rebuttal{-0.145} & \rebuttal{0.002} & \rebuttal{0.003} & \rebuttal{-0.256} & \rebuttal{-0.091} & \rebuttal{0.05} & \rebuttal{0.005} \\
0.10 & \rebuttal{0.013} & \rebuttal{0.001} & \rebuttal{-0.077} & \rebuttal{0.03} & \rebuttal{0.04} & \rebuttal{-0.211} & \rebuttal{0.005} & \rebuttal{-0.11} & \rebuttal{-0.011}\\
0.20 & \rebuttal{-0.018} & \rebuttal{-0.039} & \rebuttal{-0.086} & \rebuttal{0.003} & \rebuttal{0.007} & \rebuttal{-0.245} & \rebuttal{-0.019} & \rebuttal{-0.026} & \rebuttal{0.031}\\
0.30 & \rebuttal{0.000} & \rebuttal{-0.026} & \rebuttal{-0.083} & \rebuttal{0.006} & \rebuttal{0.015} & \rebuttal{-0.234} & \rebuttal{-0.013} & \rebuttal{-0.015} & \rebuttal{0.016}\\
0.40 & \rebuttal{0.010} & \rebuttal{-0.034} & \rebuttal{-0.012} & \rebuttal{0.002} & \rebuttal{0.019} & \rebuttal{-0.185} & \rebuttal{-0.014} & \rebuttal{-0.018} & \rebuttal{0.024}\\
0.50 & \rebuttal{-0.062} & \rebuttal{-0.048} & \rebuttal{0.005} & \rebuttal{0.006} & \rebuttal{0.016} & \rebuttal{-0.207} & \rebuttal{-0.036} & \rebuttal{-0.039} & \rebuttal{0.033}\\
0.60 & \rebuttal{-0.045} & \rebuttal{-0.028} & \rebuttal{-0.038} & \rebuttal{0.018} & \rebuttal{0.032} & \rebuttal{-0.161} & \rebuttal{0.001} & \rebuttal{-0.026} & \rebuttal{0.038}\\
0.70 & \rebuttal{0.009} & \rebuttal{-0.007} & \rebuttal{-0.025} & \rebuttal{0.011} & \rebuttal{0.025} & \rebuttal{-0.153} & \rebuttal{-0.015} & \rebuttal{-0.033} & \rebuttal{0.064}\\
0.80 & \rebuttal{0.010} & \rebuttal{-0.011} & \rebuttal{-0.046} & \rebuttal{0.011} & \rebuttal{0.02} & \rebuttal{-0.136} & \rebuttal{-0.002} & \rebuttal{0.004} & \rebuttal{0.029}\\
0.90 & \rebuttal{-0.045} & \rebuttal{0.003} & \rebuttal{-0.018} & \rebuttal{0.002} & \rebuttal{-0.002} & \rebuttal{-0.071} & \rebuttal{0.043} & \rebuttal{-0.000} & \rebuttal{-0.019}\\
0.99 & \rebuttal{0.128} & \rebuttal{-0.024} & \rebuttal{0.074} & \rebuttal{0.002} & \rebuttal{-0.015} & \rebuttal{0.048} & \rebuttal{0.033} & \rebuttal{-0.025} & \rebuttal{-0.011}\\
\bottomrule
\end{tabular}

\end{table}

\begin{table}[ht]
\centering
\caption{\rebuttal{F1 gain from using mask on $\textsc{SYNTHETIC}$ under mechanism $\textit{LD\text{-}MCAR}$}}
\label{tab:mimgain3}
\footnotesize
        \setlength{\tabcolsep}{3pt}
\begin{tabular}{lccccccccc}
\toprule
$\mu$ & \fairac & \fp & \gcnmf & \gnnmedian & \gnnmi & \goodie & \gspn & \pcfi & \gnnzero\\
\midrule
0.00 & \rebuttal{-0.073} & \rebuttal{-0.016} & \rebuttal{-0.145} & \rebuttal{0.002} & \rebuttal{0.003} & \rebuttal{-0.256} & \rebuttal{-0.094} & \rebuttal{-0.020}  & \rebuttal{0.005}\\
0.10 & \rebuttal{-0.047} & \rebuttal{0.104} & \rebuttal{-0.012} & \rebuttal{0.026} & \rebuttal{0.095} & \rebuttal{-0.222} & \rebuttal{-0.014} & \rebuttal{0.097} & \rebuttal{-0.08}\\
0.20 & \rebuttal{-0.105} & \rebuttal{-0.078} & \rebuttal{-0.081} & \rebuttal{0.004} & \rebuttal{0.075} & \rebuttal{-0.251} & \rebuttal{-0.092} & \rebuttal{-0.067} & \rebuttal{0.081}\\
0.30 & \rebuttal{-0.106} & \rebuttal{-0.119} & \rebuttal{-0.106} & \rebuttal{0.015} & \rebuttal{0.101} & \rebuttal{-0.331} & \rebuttal{-0.091} & \rebuttal{-0.118} & \rebuttal{0.133}\\
0.40 & \rebuttal{0.014} & \rebuttal{-0.044} & \rebuttal{-0.049} & \rebuttal{0.015} & \rebuttal{0.039} & \rebuttal{-0.337} & \rebuttal{-0.054} & \rebuttal{-0.033} & \rebuttal{0.098}\\
0.50 & \rebuttal{0.080} & \rebuttal{-0.002} & \rebuttal{-0.004} & \rebuttal{0.015} & \rebuttal{0.002} & \rebuttal{-0.362} & \rebuttal{0.027} & \rebuttal{0.003} & \rebuttal{0.077}\\
0.60 & \rebuttal{-0.079} & \rebuttal{-0.073} & \rebuttal{-0.068} & \rebuttal{0.004} & \rebuttal{0.081} & \rebuttal{-0.386} & \rebuttal{-0.046} & \rebuttal{-0.069} & \rebuttal{0.139}\\
0.70 & \rebuttal{-0.111} & \rebuttal{-0.084} & \rebuttal{-0.034} & \rebuttal{0.001} & \rebuttal{0.070} & \rebuttal{-0.423} & \rebuttal{-0.039} & \rebuttal{-0.060} & \rebuttal{0.139}\\
0.80 & \rebuttal{0.001} & \rebuttal{-0.084} & \rebuttal{-0.074} & \rebuttal{0.001} & \rebuttal{0.085} & \rebuttal{-0.422} & \rebuttal{-0.056} & \rebuttal{-0.086} & \rebuttal{0.130}\\
0.90 & \rebuttal{-0.067} & \rebuttal{-0.090} & \rebuttal{-0.066} & \rebuttal{0.001} & \rebuttal{0.096} & \rebuttal{-0.439} & \rebuttal{0.023} & \rebuttal{-0.072} & \rebuttal{0.143}\\
0.99 & \rebuttal{0.046} & \rebuttal{0.037} & \rebuttal{-0.054} & \rebuttal{0.007} & \rebuttal{0.014} & \rebuttal{-0.359} & \rebuttal{0.025} & \rebuttal{0.039} & \rebuttal{0.020}\\
\bottomrule
\end{tabular}

\end{table}

\begin{table}[ht]
\centering
\caption{\rebuttal{F1 gain from using mask on $\textsc{SYNTHETIC}$ under mechanism $\textit{FD\text{-}MNAR}$}}
\label{tab:mimgain4}
\footnotesize
        \setlength{\tabcolsep}{3pt}
\begin{tabular}{lccccccccc}
\toprule
$\mu$ & \fairac & \fp & \gcnmf & \gnnmedian & \gnnmi & \goodie & \gspn & \pcfi & \gnnzero\\
\midrule
0.00 & \rebuttal{-0.080} & \rebuttal{-0.018} & \rebuttal{-0.141} & \rebuttal{0.002} & \rebuttal{0.003} & \rebuttal{-0.256} & \rebuttal{-0.081} & \rebuttal{-0.018} & \rebuttal{0.005}\\
0.10 & \rebuttal{-0.035} & \rebuttal{-0.006} & \rebuttal{-0.057} & \rebuttal{0.007} & \rebuttal{0.013} & \rebuttal{-0.216} & \rebuttal{-0.002} & \rebuttal{-0.001} & \rebuttal{0.014}\\
0.20 & \rebuttal{0.018} & \rebuttal{0.015} & \rebuttal{0.024} & \rebuttal{0.06} & \rebuttal{0.005} & \rebuttal{-0.193} & \rebuttal{-0.012} & \rebuttal{-0.009} & \rebuttal{-0.005}\\
0.30 & \rebuttal{0.021} & \rebuttal{0.002} & \rebuttal{-0.005} & \rebuttal{0.002} & \rebuttal{0.007} & \rebuttal{-0.138} & \rebuttal{0.015} & \rebuttal{0.016} & \rebuttal{-0.018}\\
0.40 & \rebuttal{0.001} & \rebuttal{-0.007} & \rebuttal{-0.031} & \rebuttal{0.006} & \rebuttal{0.011} & \rebuttal{-0.186} & \rebuttal{-0.032} & \rebuttal{0.003} & \rebuttal{0.021}\\
0.50 & \rebuttal{-0.025} & \rebuttal{-0.011} & \rebuttal{-0.020} & \rebuttal{0.008} & \rebuttal{0.013} & \rebuttal{-0.208} & \rebuttal{-0.009} & \rebuttal{-0.007} & \rebuttal{0.022}\\
0.60 & \rebuttal{0.011} & \rebuttal{0.006} & \rebuttal{-0.019} & \rebuttal{0.012} & \rebuttal{0.008} & \rebuttal{-0.121} & \rebuttal{0.030} & \rebuttal{0.013} & \rebuttal{0.013}\\
0.70 & \rebuttal{0.022} & \rebuttal{0.029} & \rebuttal{0.004} & \rebuttal{0.000} & \rebuttal{0.003} & \rebuttal{-0.063} & \rebuttal{0.044} & \rebuttal{0.013} & \rebuttal{0.012}\\
0.80 & \rebuttal{0.010} & \rebuttal{0.013} & \rebuttal{-0.010} & \rebuttal{0.002} & \rebuttal{0.001} & \rebuttal{-0.006} & \rebuttal{-0.017} & \rebuttal{0.033} & \rebuttal{0.020}\\
0.90 & \rebuttal{0.053} & \rebuttal{0.032} & \rebuttal{-0.032} & \rebuttal{0.005} & \rebuttal{0.011} & \rebuttal{0.048} & \rebuttal{-0.023} & \rebuttal{0.020} & \rebuttal{0.018}\\
0.99 & \rebuttal{0.156} & \rebuttal{0.002} & \rebuttal{-0.008} & \rebuttal{0.001} & \rebuttal{0.010} & \rebuttal{-0.015} & \rebuttal{0.006} & \rebuttal{-0.003} & \rebuttal{0.007}\\
\bottomrule
\end{tabular}

\end{table}

\begin{table}[ht]
\centering
\caption{\rebuttal{F1 gain from using mask on $\textsc{SYNTHETIC}$ under mechanism $\textit{CD\text{-}MNAR}$}}
\label{tab:mimgain5}
\footnotesize
        \setlength{\tabcolsep}{3pt}
\begin{tabular}{lccccccccc}
\toprule
$\mu$ & \fairac & \fp & \gcnmf & \gnnmedian & \gnnmi & \goodie & \gspn & \pcfi & \gnnzero\\
\midrule
0.00 & \rebuttal{-0.078} & \rebuttal{-0.016} & \rebuttal{-0.145} & \rebuttal{0.002} & \rebuttal{0.003} & \rebuttal{-0.256} & \rebuttal{-0.091} & \rebuttal{-0.020} & \rebuttal{0.005} \\
0.10 & \rebuttal{-0.025} & \rebuttal{-0.002} & \rebuttal{-0.060} & \rebuttal{0.004} & \rebuttal{0.010} & \rebuttal{-0.239} & \rebuttal{-0.019} & \rebuttal{-0.005} & \rebuttal{0.001}\\
0.20 & \rebuttal{0.023} & \rebuttal{0.006} & \rebuttal{-0.003} & \rebuttal{0.004} & \rebuttal{0.002} & \rebuttal{-0.202} & \rebuttal{-0.029} & \rebuttal{0.004} & \rebuttal{0.001}\\
0.30 & \rebuttal{-0.005} & \rebuttal{0.017} & \rebuttal{-0.004} & \rebuttal{0.009} & \rebuttal{0.007} & \rebuttal{-0.121} & \rebuttal{-0.030} & \rebuttal{-0.006} & \rebuttal{0.023}\\
0.40 & \rebuttal{-0.045} & \rebuttal{0.017} & \rebuttal{-0.015} & \rebuttal{0.014} & \rebuttal{0.017} & \rebuttal{0.005} & \rebuttal{-0.024} & \rebuttal{0.021} & \rebuttal{0.020}\\
0.50 & \rebuttal{-0.035} & \rebuttal{0.010} & \rebuttal{0.001} & \rebuttal{0.048} & \rebuttal{0.010} & \rebuttal{-0.035} & \rebuttal{-0.042} & \rebuttal{0.009} & \rebuttal{0.036}\\
0.60 & \rebuttal{0.054} & \rebuttal{0.036} & \rebuttal{-0.011} & \rebuttal{0.019} & \rebuttal{0.015} & \rebuttal{-0.111} & \rebuttal{-0.047} & \rebuttal{0.073} & \rebuttal{0.037}\\
0.70 & \rebuttal{0.038} & \rebuttal{0.051} & \rebuttal{0.001} & \rebuttal{0.025} & \rebuttal{0.028} & \rebuttal{-0.064} & \rebuttal{0.031} & \rebuttal{0.072} & \rebuttal{0.026}\\
0.80 & \rebuttal{0.045} & \rebuttal{0.046} & \rebuttal{0.047} & \rebuttal{0.017} & \rebuttal{0.011} & \rebuttal{-0.028} & \rebuttal{-0.021} & \rebuttal{0.086} & \rebuttal{0.037}\\
0.90 & \rebuttal{0.136} & \rebuttal{0.033} & \rebuttal{0.039} & \rebuttal{0.011} & \rebuttal{0.021} & \rebuttal{-0.009} & \rebuttal{-0.047} & \rebuttal{0.075} & \rebuttal{0.037}\\
0.99 & \rebuttal{0.098} & \rebuttal{-0.041} & \rebuttal{0.057} & \rebuttal{0.017} & \rebuttal{0.015} & \rebuttal{-0.050} & \rebuttal{0.044} & \rebuttal{0.013} & \rebuttal{0.018}\\
\bottomrule
\end{tabular}

\end{table}

\clearpage

\begin{table}[!ht]
\footnotesize
\centering
\caption{F1 (mean $\pm$ std over 5 runs). Setup: \textbf{Reverse R2} missingness distribution shift, where training data are subject to \textit{U-MCAR} ($\mu_{tr}=0.5$), while test data have either no missingness, 25\% or 50\% of \textit{FD-MNAR} or \textit{CD-MNAR}.}
\label{tab:reverse_shift}
\renewcommand{\arraystretch}{1.15}
\setlength{\tabcolsep}{3pt}
\resizebox{\textwidth}{!}{%
\begin{tabular}{lcc|*{10}{>{\centering\arraybackslash}m{1.55cm}}}
\toprule
\textbf{Task} & \textbf{Test mech.} & \textbf{$\mu$ Test} & \goodie & \gspn & \fairac & \gcnmf & \pcfi & \fp & \gnnmi & \gnnzero & \gnnmedian & \gnnmask \\
\midrule
\multirow{6}{*}{\synthetic}
  & \textit{FD-MNAR} & 0    & \meanstd{0.61}{0.15} & \meanstd{0.75}{0.04} & \meanstd{0.64}{0.12} & \meanstd{0.78}{0.04} & \meanstd{0.85}{0.02} & \meanstd{0.85}{0.02} & \meanstd{0.86}{0.02} & \meanstd{0.86}{0.02} & \meanstd{0.86}{0.02} & \textbf{\meanstd{0.87}{0.02}} \\
  & \textit{FD-MNAR} & 0.25 & \meanstd{0.57}{0.11} & \meanstd{0.70}{0.04} & \meanstd{0.62}{0.11} & \meanstd{0.74}{0.02} & \meanstd{0.79}{0.01} & \meanstd{0.79}{0.02} & \meanstd{0.78}{0.02} & \meanstd{0.78}{0.01} & \meanstd{0.79}{0.01} & \textbf{\meanstd{0.79}{0.02}} \\
  & \textit{FD-MNAR} & 0.50 & \meanstd{0.55}{0.13} & \meanstd{0.66}{0.01} & \meanstd{0.60}{0.12} & \meanstd{0.69}{0.03} & \meanstd{0.75}{0.02} & \meanstd{0.74}{0.03} & \meanstd{0.75}{0.01} & \textbf{\meanstd{0.75}{0.02}} & \meanstd{0.75}{0.02} & \meanstd{0.75}{0.02} \\
  & \textit{CD-MNAR} & 0    & \meanstd{0.64}{0.11} & \meanstd{0.74}{0.04} & \meanstd{0.69}{0.02} & \meanstd{0.77}{0.06} & \meanstd{0.86}{0.02} & \meanstd{0.85}{0.02} & \meanstd{0.87}{0.01} & \textbf{\meanstd{0.88}{0.02}} & \meanstd{0.87}{0.01} & \meanstd{0.87}{0.03} \\
  & \textit{CD-MNAR} & 0.25 & \meanstd{0.61}{0.09} & \meanstd{0.69}{0.03} & \meanstd{0.67}{0.03} & \meanstd{0.73}{0.04} & \meanstd{0.79}{0.02} & \meanstd{0.78}{0.03} & \meanstd{0.79}{0.03} & \meanstd{0.79}{0.03} & \meanstd{0.80}{0.02} & \textbf{\meanstd{0.81}{0.03}} \\
  & \textit{CD-MNAR} & 0.50 & \meanstd{0.55}{0.06} & \meanstd{0.60}{0.02} & \meanstd{0.61}{0.08} & \meanstd{0.68}{0.04} & \meanstd{0.69}{0.02} & \meanstd{0.69}{0.02} & \meanstd{0.72}{0.03} & \meanstd{0.72}{0.02} & \meanstd{0.72}{0.03} & \textbf{\meanstd{0.72}{0.04}} \\
\midrule
\multirow{6}{*}{\airiot}
  & \textit{FD-MNAR} & 0    & \meanstd{0.67}{0.05} & \meanstd{0.88}{0.04} & \meanstd{0.70}{0.03} & \meanstd{0.70}{0.03} & \textbf{\meanstd{0.91}{0.03}} & \meanstd{0.89}{0.03} & \meanstd{0.90}{0.04} & \meanstd{0.90}{0.04} & \meanstd{0.90}{0.02} & \meanstd{0.90}{0.03} \\
  & \textit{FD-MNAR} & 0.25 & \meanstd{0.69}{0.02} & \meanstd{0.85}{0.02} & \meanstd{0.70}{0.02} & \meanstd{0.71}{0.03} & \textbf{\meanstd{0.90}{0.01}} & \meanstd{0.87}{0.02} & \meanstd{0.89}{0.03} & \meanstd{0.89}{0.04} & \meanstd{0.88}{0.02} & \meanstd{0.88}{0.03} \\
  & \textit{FD-MNAR} & 0.50 & \meanstd{0.69}{0.02} & \meanstd{0.81}{0.02} & \meanstd{0.69}{0.03} & \meanstd{0.70}{0.04} & \meanstd{0.87}{0.04} & \meanstd{0.85}{0.03} & \meanstd{0.85}{0.04} & \meanstd{0.85}{0.04} & \textbf{\meanstd{0.86}{0.03}} & \meanstd{0.84}{0.04} \\
  & \textit{CD-MNAR} & 0    & \meanstd{0.66}{0.08} & \meanstd{0.88}{0.03} & \meanstd{0.71}{0.05} & \meanstd{0.71}{0.06} & \textbf{\meanstd{0.92}{0.03}} & \meanstd{0.90}{0.02} & \meanstd{0.90}{0.01} & \meanstd{0.91}{0.01} & \meanstd{0.91}{0.01} & \meanstd{0.91}{0.01} \\
  & \textit{CD-MNAR} & 0.25 & \meanstd{0.66}{0.09} & \meanstd{0.83}{0.04} & \meanstd{0.71}{0.04} & \meanstd{0.70}{0.06} & \meanstd{0.87}{0.04} & \meanstd{0.85}{0.05} & \meanstd{0.88}{0.03} & \textbf{\meanstd{0.89}{0.03}} & \meanstd{0.87}{0.02} & \meanstd{0.88}{0.02} \\
  & \textit{CD-MNAR} & 0.50 & \meanstd{0.69}{0.05} & \meanstd{0.80}{0.06} & \meanstd{0.71}{0.04} & \meanstd{0.69}{0.06} & \meanstd{0.85}{0.04} & \meanstd{0.85}{0.06} & \meanstd{0.85}{0.02} & \textbf{\meanstd{0.86}{0.02}} & \meanstd{0.84}{0.03} & \meanstd{0.83}{0.03} \\
\midrule
\multirow{6}{*}{\electric}
  & \textit{FD-MNAR} & 0    & \meanstd{0.33}{0.05} & \meanstd{0.62}{0.10} & \meanstd{0.65}{0.04} & \textbf{\meanstd{0.91}{0.03}} & \meanstd{0.82}{0.03} & \meanstd{0.76}{0.04} & \meanstd{0.71}{0.03} & \meanstd{0.73}{0.03} & \meanstd{0.74}{0.03} & \meanstd{0.63}{0.05} \\
  & \textit{FD-MNAR} & 0.25 & \meanstd{0.33}{0.04} & \meanstd{0.61}{0.04} & \meanstd{0.51}{0.10} & \textbf{\meanstd{0.89}{0.03}} & \meanstd{0.76}{0.03} & \meanstd{0.70}{0.03} & \meanstd{0.70}{0.04} & \meanstd{0.70}{0.03} & \meanstd{0.73}{0.02} & \meanstd{0.72}{0.03} \\
  & \textit{FD-MNAR} & 0.50 & \meanstd{0.29}{0.03} & \meanstd{0.60}{0.01} & \meanstd{0.52}{0.10} & \textbf{\meanstd{0.87}{0.03}} & \meanstd{0.70}{0.02} & \meanstd{0.59}{0.01} & \meanstd{0.64}{0.08} & \meanstd{0.63}{0.07} & \meanstd{0.68}{0.02} & \meanstd{0.66}{0.03} \\
  & \textit{CD-MNAR} & 0    & \meanstd{0.33}{0.06} & \meanstd{0.61}{0.01} & \meanstd{0.64}{0.01} & \textbf{\meanstd{0.90}{0.02}} & \meanstd{0.84}{0.01} & \meanstd{0.76}{0.03} & \meanstd{0.75}{0.02} & \meanstd{0.74}{0.02} & \meanstd{0.74}{0.02} & \meanstd{0.72}{0.04} \\
  & \textit{CD-MNAR} & 0.25 & \meanstd{0.37}{0.09} & \meanstd{0.61}{0.02} & \meanstd{0.53}{0.08} & \textbf{\meanstd{0.89}{0.02}} & \meanstd{0.80}{0.02} & \meanstd{0.67}{0.04} & \meanstd{0.73}{0.02} & \meanstd{0.72}{0.02} & \meanstd{0.72}{0.01} & \meanstd{0.74}{0.04} \\
  & \textit{CD-MNAR} & 0.50 & \meanstd{0.32}{0.04} & \meanstd{0.58}{0.03} & \meanstd{0.44}{0.00} & \textbf{\meanstd{0.84}{0.04}} & \meanstd{0.73}{0.03} & \meanstd{0.59}{0.04} & \meanstd{0.68}{0.03} & \meanstd{0.66}{0.03} & \meanstd{0.66}{0.04} & \meanstd{0.69}{0.04} \\
\midrule
\multirow{6}{*}{\tadpole}
  & \textit{FD-MNAR} & 0    & \meanstd{0.56}{0.05} & \meanstd{0.59}{0.02} & \meanstd{0.74}{0.00} & \meanstd{0.72}{0.02} & \meanstd{0.71}{0.03} & \meanstd{0.77}{0.05} & \meanstd{0.83}{0.02} & \meanstd{0.83}{0.01} & \meanstd{0.84}{0.03} & \textbf{\meanstd{0.86}{0.02}} \\
  & \textit{FD-MNAR} & 0.25 & \meanstd{0.56}{0.08} & \meanstd{0.58}{0.04} & \meanstd{0.73}{0.00} & \meanstd{0.81}{0.04} & \meanstd{0.77}{0.04} & \meanstd{0.78}{0.04} & \meanstd{0.85}{0.01} & \meanstd{0.86}{0.01} & \meanstd{0.85}{0.03} & \textbf{\meanstd{0.86}{0.02}} \\
  & \textit{FD-MNAR} & 0.50 & \meanstd{0.51}{0.07} & \meanstd{0.57}{0.01} & \meanstd{0.70}{0.00} & \meanstd{0.78}{0.06} & \meanstd{0.78}{0.05} & \meanstd{0.76}{0.04} & \meanstd{0.85}{0.03} & \meanstd{0.84}{0.02} & \meanstd{0.85}{0.02} & \textbf{\meanstd{0.86}{0.02}} \\
  & \textit{CD-MNAR} & 0    & \meanstd{0.55}{0.07} & \meanstd{0.61}{0.02} & \meanstd{0.69}{0.00} & \meanstd{0.84}{0.02} & \meanstd{0.80}{0.03} & \meanstd{0.77}{0.05} & \meanstd{0.83}{0.02} & \meanstd{0.83}{0.01} & \meanstd{0.84}{0.03} & \textbf{\meanstd{0.86}{0.02}} \\
  & \textit{CD-MNAR} & 0.25 & \meanstd{0.52}{0.06} & \meanstd{0.58}{0.04} & \meanstd{0.24}{0.00} & \meanstd{0.80}{0.04} & \meanstd{0.79}{0.03} & \meanstd{0.77}{0.02} & \meanstd{0.83}{0.03} & \meanstd{0.83}{0.02} & \meanstd{0.84}{0.02} & \textbf{\meanstd{0.85}{0.03}} \\
  & \textit{CD-MNAR} & 0.50 & \meanstd{0.51}{0.11} & \meanstd{0.55}{0.03} & \meanstd{0.24}{0.00} & \meanstd{0.75}{0.07} & \meanstd{0.80}{0.02} & \meanstd{0.74}{0.04} & \meanstd{0.84}{0.01} & \meanstd{0.83}{0.01} & \meanstd{0.83}{0.03} & \textbf{\meanstd{0.85}{0.03}} \\
\bottomrule
\end{tabular}}
\end{table}

\clearpage

\section{Feature-Level Missingness on Learned Embeddings}
\label{app:embeddings}

In Section~\ref{sec:are_we_evaluating} we argued that learned embeddings are unsuitable for evaluating robustness to missing node features, because the information they encode is typically distributed redundantly across many latent dimensions in an overparameterized manner~\cite{arora2016latent,arora2018linear}. As a consequence, masking individual embedding dimensions does not meaningfully expose the effects of missingness. In this appendix, we empirically validate this claim.

\paragraph{Setup.} We start from the AIR dataset, whose raw node features consist of $7$ semantically meaningful environmental measurements. We train a standard autoencoder on the complete feature matrix to project these $7$-dimensional raw features into a $256$-dimensional latent space, yielding an overparameterized representation that mirrors the typical regime of learned embeddings used in large-scale graph benchmarks. We then apply U-MCAR feature missingness at the same rates $\mu \in [0, 1]$ to both: (i) the raw $7$-dimensional features, on which we train GNNmim, and (ii) the $256$-dimensional autoencoder embeddings, on which we train an analogous GNN (AE-GNN). At $\mu = 0$, the two models achieve comparable F1 scores ($\sim 0.92$--$0.93$), confirming that the autoencoder does not alter the predictive signal but merely redistributes it across more dimensions.

\paragraph{Results.} Figure~\ref{fig:ae-air} reports the F1 score as a function of $\mu$ for both models. The two curves reveal a striking gap: AE-GNN maintains near-constant performance up to $\mu = 0.7$ (from $0.922$ at $\mu=0$ to $0.896$ at $\mu=0.7$), and degrades sharply only when $\mu \to 1$. In contrast, GNNmim on raw features starts to degrade already at $\mu = 0.2$ ($0.930 \to 0.859$). Remarkably, at $\mu = 0.5$ AE-GNN ($0.905$) even outperforms GNNmim at $\mu = 0.1$ ($0.899$): losing $50\%$ of the embedding dimensions is less harmful than losing $10\%$ of the raw features.

\paragraph{Discussion.} This experiment confirms empirically what we argued conceptually in Section~\ref{sec:are_we_evaluating}: feature-level missingness on embeddings does not constitute a meaningful evaluation challenge, precisely because the original $7$-dimensional raw signal is spread redundantly across $256$ latent dimensions. Even at high missingness rates, the surviving dimensions retain enough information to reconstruct the predictive signal, making the task essentially unaffected by the missingness mechanism. This further reinforces our position that benchmarks based on learned embeddings are ill-suited for evaluating the robustness of GNNs to missing node features, independently of dataset scale.

\begin{figure}[t]
    \centering
    \includegraphics[width=0.5\linewidth]{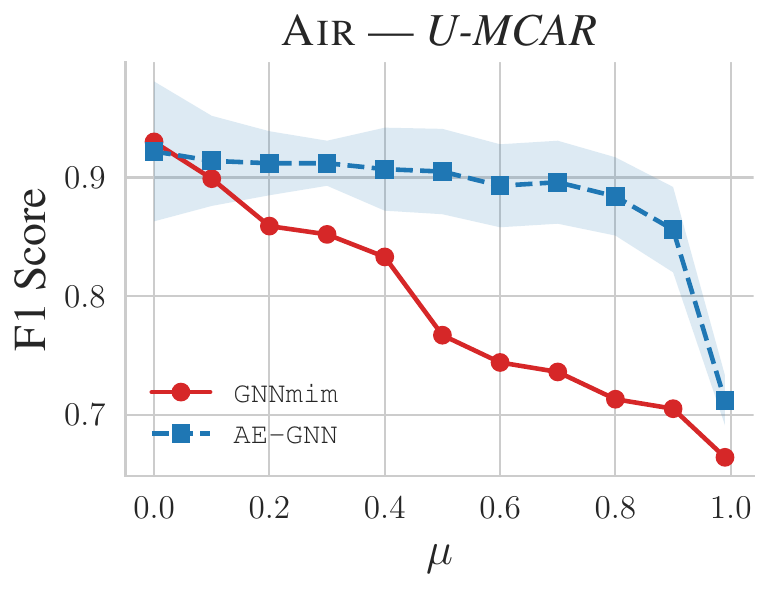}
    \caption{F1 score as a function of the missingness rate $\mu$ on the AIR dataset under U-MCAR. GNNmim is trained on the raw $7$-dimensional features, while AE-GNN is trained on $256$-dimensional autoencoder embeddings of the same features. Despite starting from comparable performance at $\mu = 0$, AE-GNN is virtually unaffected by missingness up to very high rates, confirming that masking learned embeddings does not meaningfully expose the effects of feature missingness.}
    \label{fig:ae-air}
\end{figure}

\section{Additional Baselines: Classical Imputation and Non-Graph Methods}
\label{app:additional-baselines}

The main paper compares \texttt{GNNmim} against specialized GNN-based methods for
incomplete features. In this appendix we complement this analysis with two
further families of baselines, addressing two distinct questions:
(i) does explicitly modeling the missingness indicator outperform classical
multivariate imputation used as preprocessing? and (ii) is the graph structure
itself necessary, or can simpler non-graph models achieve comparable performance
on our datasets?

We focus on the \emph{CD-MNAR} mechanism, as it is the most challenging setting
in our protocol: missingness depends on the value of features that are
informative for the label, and methods that implicitly assume MAR (such as
standard iterative imputation) are expected to be most affected.

\subsection{Comparison with Classical Iterative Imputation}
\label{app:mice}

We compare \texttt{GNNmim} against \texttt{MICE+GNN}, where missing entries are
first imputed via Multiple Imputation by Chained Equations~\citep{van2011mice}
and the resulting (complete) feature matrix is then used as input to a standard
GNN with the same backbone search as \texttt{GNNmim}. This isolates the effect
of explicitly modeling the missingness mask versus performing high-quality
imputation as a preprocessing step.

Figure~\ref{fig:gnnmim_vs_mice} reports F1 score as a function of the
missingness rate $\mu$ under CD-MNAR. Across all four datasets,
\texttt{GNNmim} is consistently competitive with or outperforms
\texttt{MICE+GNN}, with the gap widening as $\mu$ increases. The effect is
particularly pronounced on \textsc{Electric} (F1 $0.850$ vs.\ $0.561$ at
$\mu{=}0.5$) and \textsc{Synthetic} (F1 $0.725$ vs.\ $0.621$ at $\mu{=}0.5$),
where MICE's MAR assumption is violated by construction. This confirms that
under MNAR mechanisms, explicitly exposing the missingness pattern to the model
yields more robust performance than even sophisticated imputation as
preprocessing.

\begin{figure}[h]
    \centering
    \includegraphics[width=\linewidth]{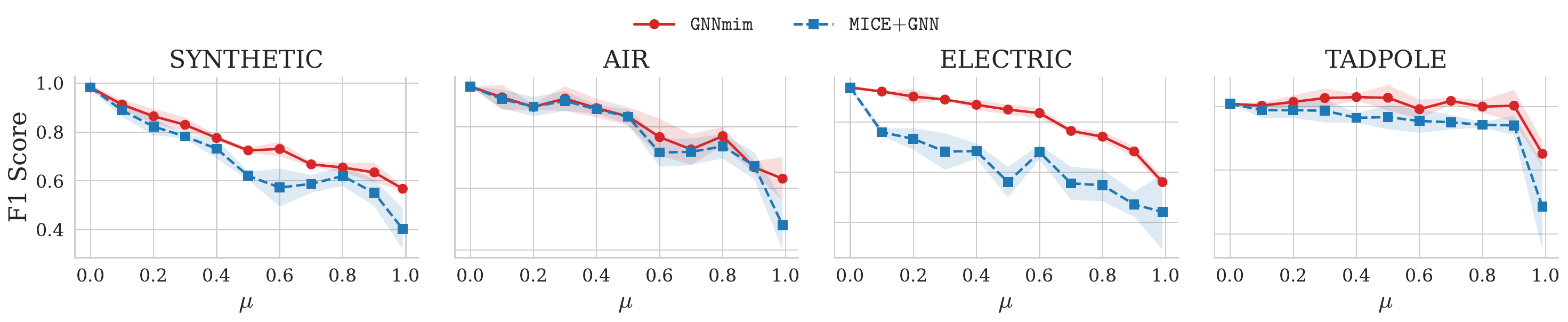}
    \caption{F1 score (mean $\pm$ std over 5 runs) as a function of the
    missingness rate $\mu$ under \emph{CD-MNAR}. \texttt{GNNmim} is compared
    against \texttt{MICE+GNN}, where missing values are imputed via Multiple
    Imputation by Chained Equations before being fed to the same GNN backbone.}
    \label{fig:gnnmim_vs_mice}
\end{figure}

\subsection{Comparison with Non-Graph Baselines}
\label{app:nongraph}

To assess whether the graph structure is genuinely useful on our proposed
datasets, we additionally compare \texttt{GNNmim} against two strong non-graph
baselines that operate directly on node features (ignoring the adjacency
structure):

\begin{itemize}
    \item \texttt{MLP+MIM}: a multilayer perceptron applied to the
    concatenation of zero-filled features and the binary missingness mask, with
    the same MIM principle used in \texttt{GNNmim} but without any message
    passing.
    \item \texttt{XGBoost}~\citep{chen2016xgboost}: gradient-boosted decision
    trees with native handling of missing values, which is a strong baseline
    on tabular data with missingness.
\end{itemize}

Figure~\ref{fig:gnnmim_vs_nongraph} reports results under CD-MNAR.
\texttt{GNNmim} consistently and substantially outperforms both non-graph
baselines across all missingness levels. The gap is already large at
$\mu{=}0$ (e.g., \textsc{Synthetic}: \texttt{GNNmim} $0.983$ vs.\ \texttt{MLP+MIM}
$0.752$), confirming that graph structure is informative independently of
missingness. Crucially, the gap \emph{widens} as $\mu$ increases
(\textsc{Electric} at $\mu{=}0.5$: \texttt{GNNmim} $0.850$ vs.\ \texttt{XGBoost}
$0.627$; \textsc{Tadpole} at $\mu{=}0.9$: \texttt{GNNmim} $0.803$ vs.\
\texttt{MLP+MIM} $0.435$), showing that neighborhood aggregation increasingly
compensates for feature loss. These results confirm that the graph structure
in our proposed datasets is genuinely informative for the prediction task and
is not an artifact, while also showing that the MIM principle alone is not
sufficient: combining it with message passing is what makes \texttt{GNNmim}
robust under high missingness.

\begin{figure}[h]
    \centering
    \includegraphics[width=\linewidth]{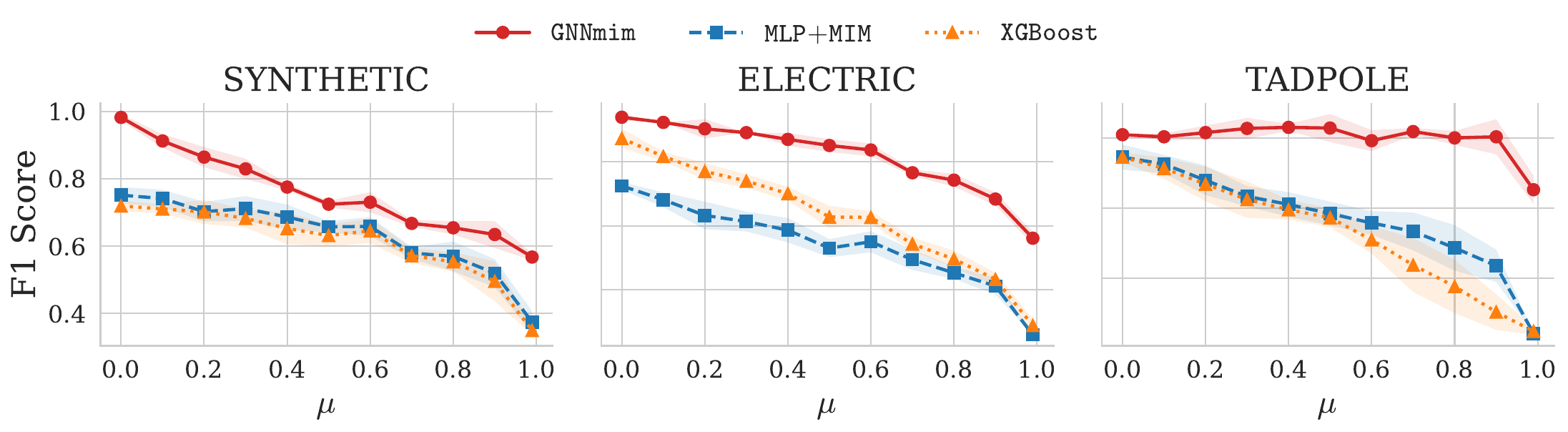}
    \caption{F1 score (mean $\pm$ std over 5 runs) as a function of the
    missingness rate $\mu$ under \emph{CD-MNAR}. \texttt{GNNmim} is compared
    against two non-graph baselines: \texttt{MLP+MIM} (a multilayer perceptron
    with the same missingness-mask augmentation) and \texttt{XGBoost} with
    native handling of missing values.}
    \label{fig:gnnmim_vs_nongraph}
\end{figure}

\end{document}